\numberwithin{equation}{section} 
\newtheorem{theorem}{Theorem}
\begin{document}
	
	\begin{frontmatter}
		
		
		\title{Improving population size adapting CMA-ES algorithm on step-size blow-up in weakly-structured multimodal functions}
		
		\author{Chandula Fernando\corref{cor1}\fnref{label1}}
		\ead{ chandufernando99@gmail.com}
		\cortext[cor1]{Corresponding author.}
		
		\author{Kushani De Silva\fnref{label2}}
		\ead{ kpdesilva@uncg.edu}
		
		

		\begin{abstract}
			
			Multimodal optimization requires both exploration and exploitation. Exploration identifies promising attraction basins, while exploitation finds the best solutions within these basins. The balance between exploration and exploitation can be maintained by adjusting parameter settings. The population size adaptation covariance matrix adaption evolutionary strategy algorithm (PSA-CMA-ES) achieves this balance by dynamically adjusting population size. PSA-CMA-ES performs well on well-structured multimodal benchmark problems. In weakly structured multimodal problems, however, the algorithm struggles to effectively manage step-size increases, resulting in uncontrolled step-size blow-ups that impede convergence near the global optimum. In this study, we reformulated the step-size correction strategy to overcome this limitation. We analytically identified the cause of the step-size blow-up and demonstrate the existence of a significance level for population size change guiding a safe passage to step-size correction. These insights were incorporated to form the reformulation. Through computer experiments on two weakly structured multimodal benchmark problems, we evaluated the performance of the new approach and compared the results with the state-of-the-art algorithm. The improved algorithm successfully mitigates step-size blow-up, enabling a better balance between exploration and exploitation near the global optimum enhancing convergence.
			
		\end{abstract}

		\begin{keyword}
			evolutionary optimization \sep step-size blow-up \sep adaptation \sep covariance matrix \sep multimodal \sep weakly-structured
			
		\end{keyword}
		
	\end{frontmatter}

	\section{Introduction}
	\label{Sec:Intro}
	
	Inspired by Darwinian principles of natural selection, evolutionary algorithms solve complex optimization problems by selecting the best solutions at each generation to reproduce. This evolutionary process iteratively explores and refines solutions at promising basins of the search space. Exploration of the search space requires covering a larger area, while exploiting promising regions concentrates on a smaller area balancing exploration and exploitation to locate the global optimum. Maintaining this balance is critical in evolutionary optimization algorithms \citep{ES_11}. The Covariance Matrix Adaptation Evolutionary Strategy algorithm (CMA-ES) is a state-of-the-art stochastic evolutionary algorithm designed to solve complex non-linear, non-convex, continuous black-box optimization problems \cite{CMA_2,CMA_5,CMA_12,CMA_17}. Developed by N. Hansen and A. Ostermeier in 1996, the algorithm adapts the covariance matrix over iterative generations guiding the search distribution to converge at the optimum \cite{CMA_1}. Generally, CMA-ES is applied when derivative-based methods fail and has shown competitive behavior in non-convex, non-smooth, non-separable, as well as multimodal functions \cite{CMA_3,CMA_5}. A key feature of the CMA-ES algorithm is its quasi-parameter-free nature where all parameters (e.g. learning rates, recombination weights) depend only on the dimension of the problem. This makes CMA-ES popular among practitioners as it alleviates the expensive trial-and-error approach of parameter tuning and has been used in various real-world applications \cite{Appli_1,Appli_2,Appli_3,Appli_4,Appli_5,Appli_6}. However, in CMA-ES the population size of the candidate solutions (\(\lambda\)) sometimes requires tuning. A population size larger than the default value of \( \lambda_{\text{def}} = 4 + \lfloor3 \ln(n)\rfloor\) has been shown to improve the quality of solutions obtained by CMA-ES when applied to multimodal functions \cite{CMA_4,ES_9,CMA_11}. Here, \(\lfloor . \rfloor\) represents the floor function that rounds a number down to the nearest integer less than or equal to its original value. For example, in multimodal problems, a large population helps effectively explore the search space, but a  smaller population size is sufficient once the algorithm converges to a promising region. Thus a satisfactory value for \(\lambda\) may vary throughout the optimization process \cite{PSA_1}. \\
	
	Thus in 2018, a population size adapting variant of CMA-ES, called Population Size Adaptation Covariance Matrix Adaptation Evolutionary Strategy (PSA-CMA-ES) was introduced. This algorithm adapts the population size by estimating the update accuracy of all distribution parameters such as \(\mathbf{m}\), \(\sigma\), and \(\mathbf{C}\). The goal is to quantify the accuracy of the natural gradient estimation at the current population size at the current situation. To achieve this, they introduced an evolution path defined in the distribution parameter space, and the population size adapted by taking into account cues from this evolution path. The relationship between step-size and increasing population size, as studied in \cite{ES_10}, necessitated a step-size correction to account for changes in step-size caused by increases in population size \cite{PSA_2}. Thus, PSA-CMA-ES does a step-size correction following the population size adaptation. Combined with a restart strategy, PSA-CMA-ES works well on well-structured multimodal problems. However, it fails to deliver similar results on weakly-structured multimodal problems, e.g. Rastrigin, Schaffer functions \cite{PSA_3}. In particular, the algorithm's tendency to continually increase the step-size over generations prevents convergence, wasting valuable computational resources. Further, PSA-CMA-ES fails to adapt the population size across generations leading to poor performance on weakly-structured two-dimensional multimodal functions. Additionally, since PSA-CMA-ES keeps increasing step-size, after about 10-15 generations, the algorithm gets stuck in a loop, significantly increasing CPU time without improving results. These shortcomings which became evident when tested on weakly-structured multimodal two-dimensional Rastrigin and Schaffer functions, underscore the need for a more efficient approach. \\

    This paper introduces a novel reformulation of the step-size correction mechanism in PSA-CMA-ES, specifically designed to address challenges with exploration and convergence in weakly-structured functions. The proposed approach effectively avoids step-size blow-up near optima ensuring a more precise convergence. We analytically demonstrate that the continual blow up of step-size observed in PSA-CMA-ES is a direct consequence of the step-size correction mechanism. We further show the existence of a significance level for population size change determining when the step-size correction becomes necessary. Accounting for these factors the reformulated step-size correction mechanism leverages the population size adaptation to improve exploration and exploitation of the search space by avoiding step-size blow-up. Moreover, we experimentally highlighted the importance of retaining a controlled form of the step-size correction since removing it entirely leads to premature convergence. Our experiments tested on Rastrigin and Schaffer, two dimensional weakly-structured multimodal benchmark problems, confirming the theoretical insights and showed higher performance with respect to the state-of-the-art PSA-CMA-ES algorithm in terms of computational efficiency and convergence. \\

	The rest of this paper is structured as follows. Section \ref{sec:PSA-CMA-ES} provides an overview of the CMA-ES algorithm and its population-size adapting variant, PSA-CMA-ES. Section \ref{sec:New-Reformulation} presents the reformulation, organized into subsections. Section \ref{Analysis} provides an analysis of the reformulated algorithm. Section \ref{sec:Exp-Setting} offers experimental evidence supporting the analysis of step-size blow-up in the general PSA-CMA-ES. Section \ref{sec:Reformulation} introduces the reformulated algorithm. Section \ref{sec:Implementation_Results} reports the performance results of the reformulation compared to the general PSA-CMA-ES. Finally, Section \ref{sec:Conclusion} summarizes the findings of this study and discusses potential directions for future research.

	\section{PSA-CMA-ES}
	\label{sec:PSA-CMA-ES}
	
		\subsection{General CMA-ES}
		\label{subsec:CMA-ES}
		
	The CMA-ES algorithm, minimizing the \(n\)-dimensional function  \( f: \mathbb{R}^{n} \rightarrow \mathbb{R} \) samples its candidate solutions from a multivariate normal distribution \( N(\mathbf{m}, \sigma^2 \mathbf{C})\) where \(\mathbf{m}\), \(\sigma\), and \(\mathbf{C}\) respectively represent the mean vector, step-size, and covariance matrix \cite{CMA_1}. The initial mean vector \(\mathbf{m}^{(0)}\) and covariance matrix \(\mathbf{C}^{(0)} = \mathbf{I}\) are predefined and the step-size \(\sigma^{(0)}\) is initialized to half of function’s initialization interval.  At each generation a $\lambda_\text{def}$ sized population of candidate solutions are sampled from this multivariate normal distribution \cite{ES_4},
	\begin{equation}\label{Eq: lam_default}
	\lambda_\text{def} = 4+\lfloor  3\ln(n) \rfloor.
	\end{equation}
	 These candidates are evaluated on the objective function \(f\), and ranked based on their performance. In general, the \(\lfloor\frac{\lambda}{2}\rfloor\) candidates demonstrating best fitness are selected as ``parents'' of the next generation and their mean vector is shifted by a weighted rank index,	
	\begin{equation} \label{eq:2}
		\textbf{m}^{(g+1)} = \textbf{m}^{(g)} + c_m\sum_{i=1}^{\mu}w_i(\textbf{x}^{(g+1)}_{i:\lambda} - \textbf{m}^{(g)}).
	\end{equation}
	Here, \(\textbf{x}_{i:\lambda}\) refers to the \(i^{\text{th}}\) best candidate, \(w_{i}\) represents the assigned weight and \(c_{m}\) is the learning rate of the mean vector. This updates the mean vector \(\textbf{m}^{(g+1)}\) as a weighted average of the best-performing candidates from the previous generation to shift the algorithm towards regions of higher fitness. \\

	\subsubsection{Step-Size Adaptation of CMA-ES}
	\label{sec:New Reformulation}
	
	In the CMA-ES algorithm, the step-size or \(\sigma\) adaptation is independent of the covariance matrix update. The adaptation mechanism, known as the cumulative step-size adaptation (CSA), accumulates the steps of mean vector updates using a conjugate evolution path \(\textbf{p}_{\sigma}\) where \cite{ES_4}
	\begin{equation} \label{eq:4}
		\textbf{p}^{(g+1)}_{\sigma} = (1-c_{\sigma}) \textbf{p}^{(g)}_{\sigma} + \sqrt{c_{\sigma}(2-c_{\sigma})\mu_{\text{eff}}} 
		\left(\textbf{C}^{(g)}\right)^{-\frac{1}{2}} \left(\frac{\textbf{m}^{(g+1)} - \textbf{m}^{(g)}}{\sigma^{(g)}}\right).
	\end{equation}
	It is important to note here that multiplying by \(\textbf{C}^{(g)^{-\frac{1}{2}}}\) verifies that the expected length of \(\textbf{p}_{\sigma}\) becomes independent of its direction and \(\textbf{p}_{\sigma} \sim N(\textbf{0},\textbf{I})\). \(\mu_{\text{eff}}\) sometimes referred to as \(\mu_{w}\) and it is the variance-effective selection mass which determines the influence of selected candidates on \(\sigma\). If candidates are given equal weights (\(w_{i} = \frac{1}{\mu}\)), then \(\mu_{\text{eff}} = \mu\) \cite{CMA_3}. The step-size \(\sigma^{(g)}\) is updated by comparing the length of evolution path \(||\textbf{p}_{\sigma}||\) to its expected length, \( ||E[N(\textbf{0},\textbf{I})]||\), as shown in Eq. (\ref{eq:5}) below \cite{ES_4}.
	\begin{equation} \label{eq:5}
		\sigma^{(g+1)} = \sigma^{(g)} \text{exp}\left(\frac{c_{\sigma}}{d_{\sigma}} \left(\frac{||\textbf{p}^{(g+1)}_{\sigma}||}{||E[N(\textbf{0},\textbf{I})]||} - \sqrt{\gamma^{(g+1)}_{\sigma}} \right)\right).
	\end{equation}
	
	The damping paramter \(d_{\sigma} \geq 1\) limits undesirable variations of \(\sigma\) between generations \cite{ES_4}. Thus, \(\sigma\) is updated depending on how consecutive steps are correlated and these correlations are identified under three cases; (a) If steps are anti-correlated (length of evolution path $|| \textbf{p}_{\sigma}||$ is shorter than expected), they effectively cancel each other out. Consequently, the step-size should be reduced allowing exploitation, (b) If steps are correlated (length of evolution path $|| \textbf{p}_{\sigma}||$ is longer than expected), they point in the same direction. Consequently the step-size should be increased allowing more exploration. (c) If steps are uncorrelated (length of evolution path $|| \textbf{p}_{\sigma}||$ is $||E[N(0,\textbf{I})]||$), they are perpendicular. Consequently step-size is not updated.

	\subsubsection{Covariance matrix adaptation in CMA-ES}
	
	The covariance matrix adaptation (CMA) in CMA-ES involves two steps: the rank\(-1\) update and the rank\(-\mu\) update. Rank\(-1\) employs an evolutionary path \(\textbf{p}^{(g+1)}_{c}\) to update the covariance matrix where \(\textbf{p}_{c}\) evolution path accumulates successive steps as an exponentially smoothed sum following, 
	\begin{equation} \label{eq:pc}
		\textbf{p}^{(g+1)}_{c} = (1-c_{c}) \textbf{p}^{(g)}_{c} + h_{\sigma}\sqrt{c_{c}(2-c_{c})\mu_{\text{eff}}} \left( \frac{\textbf{m}^{(g+1)} - \textbf{m}^{(g)}}{\sigma^{(g)}}\right),
	\end{equation}
	where 
	\begin{equation}
h^{(g+1)}_{\sigma} = \begin{cases}
	1, \text{ if } ||\textbf{p}^{(g+1)}_{\sigma}||<\left( 1.4 + \frac{2}{n+1}\right)  E\left[ ||N(\textbf{0},\textbf{I})||\right]  \sqrt{\gamma^{(g+1)}_{\sigma}}\\
	0, \text{ otherwise}.
\end{cases}
	\end{equation}
	The next step, rank-\(\mu\), estimates the covariance matrix of the next generation by using a weighted recombination to select a potentially better covariance matrix by reproducing successful steps, and a maximum likelihood estimation, \((\textbf{x}^{(g+1)}_{i:\lambda} - \textbf{m}^{(g)})(\textbf{x}^{(g+1)}_{i:\lambda} - \textbf{m}^{(g)})^{T}\) that increases the variance in the direction of the natural gradient. The overall covariance update combines rank\(-1\) and rank\(-\mu\) \cite{CMA_17},
	\begin{align} \label{eq:cm_update}
			\textbf{C}^{(g+1)} &= \textbf{C}^{(g)} + c_{1}\left(\textbf{p}^{(g+1)}_{c}\left( \textbf{p}^{(g+1)}_{c}\right) ^{T} - \gamma^{(g+1)}_{c} \textbf{C}^{(g)}\right) \\   &+c_{\mu}\sum_{i=1}^{\lambda}w_i\left(\left( \textbf{x}^{(g+1)}_{i:\lambda} - \textbf{m}^{(g)}\right) \left( \textbf{x}^{(g+1)}_{i:\lambda} - \textbf{m}^{(g)}\right) ^{T} - \textbf{C}^{(g)}\right). \notag
	\end{align}
	The Heaviside step function, \(h_{\sigma}^{(g+1)}\) in Eq. (\ref{eq:pc}) determines the evolution path contribution to the update and  \(E[||N(\textbf{0},\textbf{I})||] = \sqrt{2}\frac{\Gamma(\frac{n+1}{2})}{\Gamma(\frac{n}{2})} \approx \sqrt{n}(1 - \frac{1}{(4n)} + \frac{1}{21n^{2}})\) is an approximation for the expected norm of the n-variate standard normal distribution \cite{CMA_5}. Two normalization factors, \(\gamma^{(g+1)}_{\sigma}\) and \(\gamma^{(g+1)}_{c}\), which converge to \(1\) as \(g\) increases, are introduced for a clean derivation of the population size adaptation mechanism. Although their effects are not recognizable significantly in implementations, they are included for a well-rounded perspective \cite{PSA_2}.

	\subsection{State-of-the-art PSA-CMA-ES} \label{generalPSA}
	
    This section explains the population size adaptation mechanism applied to CMA-ES explained in Section \ref{subsec:CMA-ES}. The PSA-CMA-ES algorithm extends the general CMA-ES algorithm by adapting the population size \(\lambda\) at every generation \cite{PSA_2}. In contrast to CMA-ES, PSA-CMA-ES update the population size $\lambda$ at every generation.\\

   The population size $\lambda$ is adapted at every generation based on the length of evolution path \(\textbf{p}_{\theta}\) \cite{PSA_3,PSA_2},
    \begin{equation} \label{eq:pop-size_update}
    	\lambda^{(g+1)} \leftarrow \lambda^{(g)} \text{exp}\left[\beta\left((\gamma^{(g+1)}_{\theta}) - \frac{||\textbf{p}^{(g+1)}_{\theta}||^{2}}{\alpha}\right)\right], 
    \end{equation}
   with $\lambda^{(0)}$ is the default population size in CMA-ES \cite{CMA_17}. The parameter $\gamma_{\theta}^{(g+1)}$ is referred to as a normalization factor for $\textbf{p}_{\theta}$ defined by $(1-\beta)^2\gamma_{\theta}^{(g)} + \beta (2-\beta)$ which converges to 1 as $g$ increases. Additionally, $\beta=0.4$ is the population size learning parameter, $\gamma_{\theta}^{(0)}=0$, and $\alpha=1.4$ \cite{PSA_2}. The evolution path $\textbf{p}_{\theta}$ is given by, 
   	\begin{equation} \label{eq:p_theta}
   	\textbf{p}^{(g+1)}_{\theta} = (1-\beta)\textbf{p}^{(g)}_{\theta} + \sqrt{\beta(2-\beta)} \frac{\sqrt{\textbf{F}}^{(g)}\Delta\theta^{(g+1)}}{\text{E}\left[ \left\lVert\sqrt{\textbf{F}}^{(g)}\Delta\theta^{(g+1)}\right\rVert^{2}\right] ^{\frac{1}{2}}},
   \end{equation}
where \textbf{F} is the Fisher Information matrix, $\text{E}$ denotes the expected value, and $||. ||$ denotes the $L-2$ norm. The matrix $\textbf{F}$ can be approaximated by $\textbf{C}^{-1}$ since candidate solutions are normally distributed. Further, in \cite{PSA_3,PSA_2}, the population size update is forced within two bounds such that,
    \begin{equation} \label{eq:pop-size_boundary}
	\lambda^{(g+1)}_{r} = \begin{cases}
		\lambda_{\text{min}}, \quad &\text{if} \quad  \lambda^{(g+1)} \leq \lambda_{\text{min}},\\
		\text{round }(\lambda^{(g+1)}), \quad &\text{if} \quad  \lambda_{\text{min}} < \lambda^{(g+1)} < \lambda_{\text{max}},\\
		\lambda_{\text{max}},\quad &\text{if} \quad  \lambda^{(g+1)} \geq \lambda_{\text{max}},
	\end{cases}
\end{equation}
  where $\lambda_{\text{min}} = \lambda$ \cite{CMA_17} and $\lambda_{\text{max}}=512\, \lambda$. The vector $\Delta \theta^{(g+1)} \in \mathbb{R}^{(n(n+3)/2)}$ in Eq.\eqref{eq:p_theta} records the parameter evolution after each generation $g$,
   	\begin{equation}  \label{eq:paramter_update_vector}
{\Delta\theta}^{(g+1)} = \left( \Delta \textbf{m}^{(g+1)}, \text{vech}\left( \Delta\Sigma^{(g+1)}\right) \right),
   \end{equation} 
   	where $\Delta \textbf{m}^{(g+1)} = \textbf{m}^{(g+1)} - \textbf{m}^{(g)}$ and
   $\Delta\Sigma^{(g+1)} = \left( \sigma^{(g+1)}\right) ^{2}\textbf{C}^{(g+1)} - \left( \sigma^{(g)}\right) ^{2}\textbf{C}^{(g)}$ with $\Sigma = \sigma^2\textbf{C}$. 
   The vector $\text{vech}(\Sigma)$ arranges the $(i,j)^{\text{th}}$ entry of $\Sigma$ into the $\text{vech}(\Sigma)$ at entry $\left(i-j+1+\sum_{k=1}^{j-1}(n-k+1)\right)$.\\

   However, adapting population size causes undue blow-ups in the step-size $\sigma$ leading to unstable adaptations \cite{ES_7}. To address this fault, a step-size correction was introduced in \cite{PSA_2}. The $\sigma$ correction mechanism is given below.
	\begin{equation} \label{eq:step-size_correction}
		\sigma_c^{(g+1)} = \sigma^{(g+1)} \frac{\rho\left( \lambda^{(g+1)}_{r}\right) }{\rho\left( \lambda^{(g)}_{r}\right) },
	\end{equation}	
where the scaling factor $\rho(\cdot)$ is given by,
	\begin{equation} \label{eq:step-size-correction_rho_eqn}
		\rho\left( \lambda^{(g)}_{r}\right)  = \frac{ n \left(-\sum_{i=1}^{\lambda_{r}}w_i \text{E}[\mathcal{N}_{i:\lambda_{r}}]\right) \mu_{\text{w}}}{\left[n-1+\left(\left(-\sum_{i=1}^{\lambda_{r}}w_i \text{E}[\mathcal{N}_{i:\lambda_{r}}]\right)^{2}\mu_{\text{w}}\right)\right]},
	\end{equation}	
	where $\text{E}[\mathcal{N}_{i:\lambda_{r}}] = \textbf{m} + \sigma \, \text{E}\left[ N(\textbf{0},\textbf{I})\right] $ is the expected normal order statistic for sorted candidate solutions with respect to performance, i.e. value of $f(\textbf{x}_i)$ \cite{PSA_2}. However an approximation to this expected normal order statistic is given in \cite{NOS_1},
		\begin{equation} \label{eq:Expected_NOS}
		\text{E}[\mathcal{N}_i]  \approx \mu + \phi^{-1} \left(\frac{i- \alpha_{1}}{\lambda_r - 2\alpha_{1} + 1}\right), \quad \text{ for } i=1,\cdots,\lambda_r,
	\end{equation}
	where $\phi$ is the cumulative normal distribution function and  $\alpha_{1}=0.375$. Although this step-size correction mechanism was introduced in \cite{PSA_2}, its performance on benchmark problems was evaluated in \cite{PSA_3}. The results in \cite{PSA_3} still exhibited a step-size blow-up, even with the mechanism from \cite{PSA_2} applied. To address this issue, \cite{PSA_3} proposed a new restart mechanism where the algorithm uses CMA-ES optimum value upon its crash as the initial start of the PSA-CMA-ES. Then another restart uses PSA-CMA-ES with a small initial step-size $\sigma^{(0)}\sim 2\times10^{-2\mathrm{Uni}[0,1]}$.

\section{Reformulation to PSA-CMA-ES}
	\label{sec:New-Reformulation}
This section presents the results of our analysis on the reformulation of the PSA-CMA-ES algorithm, along with the corresponding numerical evidence supporting the findings. The final part of the section compares the performance of the reformulated algorithm with the general state-of-the-art PSA-CMA-ES algorithm, using evaluations on two-dimensional Rastrigin and Schaffer benchmark functions.

\subsection{Mathematical analysis}
         \label{Analysis}

\subsubsection{Analysis of step-size ($\sigma$) blow-up }
	\label{step-size blow-up}
	
As discussed in the previous section, the step-size correction has used the normal order statistic in its mechanism (see Eqs. \eqref	{eq:step-size_correction},\eqref{eq:step-size-correction_rho_eqn},\eqref{eq:Expected_NOS}). When PSA-CMA-ES is applied to a minimization problem, the expected mean reduces as the generation progresses, i.e.  \(\textbf{m}^{(g+1)} \leq \textbf{m}^{(g)}\), as \(g \rightarrow g+1\). This results in reducing the approximated expected normal order statistic as generation progresses, i.e. $\text{E}[\mathcal{N}_{i:\lambda_{r}^{(g+1)}}] \leq \text{E}[\mathcal{N}_{i:\lambda_{r}^{(g)}}]$. Thus in the following theorem, we demonstrate why the blow-up in step-size is takes place in PSA-CMA-ES.\\

	\begin{theorem}\label{Theorem:Factor>1}
 If $\text{E}[\mathcal{N}_{i:\lambda_{r}^{(g+1)}}] \leq \text{E}[\mathcal{N}_{i:\lambda_{r}^{(g)}}]$ and $ \textbf{m}^{(g+1)} \leq \textbf{m}^{(g)}$ with $\mu_{w}^{(g+1)} - \mu_{w}^{(g)} \leq \delta$ where $\delta$ is a small positive value, then ${\sigma_c^{(g+1)}}/{\sigma^{(g+1)}} \geq 1$.
	\end{theorem}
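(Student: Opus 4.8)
The plan is to collapse the statement into a single comparison of the scaling factor $\rho$ and then into an elementary inequality between two scalars. Equation~\eqref{eq:step-size_correction} gives at once
\[
\frac{\sigma_c^{(g+1)}}{\sigma^{(g+1)}} \;=\; \frac{\rho\left(\lambda_r^{(g+1)}\right)}{\rho\left(\lambda_r^{(g)}\right)},
\]
so the whole theorem reduces to showing $\rho(\lambda_r^{(g+1)}) \ge \rho(\lambda_r^{(g)})$. To keep the dependence visible I would abbreviate
\[
S^{(g)} \;:=\; -\sum_{i=1}^{\lambda_r^{(g)}} w_i\,\text{E}\left[\mathcal{N}_{i:\lambda_r^{(g)}}\right], \qquad \rho\left(\lambda_r^{(g)}\right) \;=\; \frac{n\,S^{(g)}\mu_w^{(g)}}{(n-1) + \big(S^{(g)}\big)^2\mu_w^{(g)}},
\]
by Eq.~\eqref{eq:step-size-correction_rho_eqn}, and similarly for $g+1$; everything then hinges on how $S$ and $\mu_w$ move from generation $g$ to $g+1$.

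The second step extracts the sign information the hypotheses carry. From the order–statistic approximation Eq.~\eqref{eq:Expected_NOS}, every index receiving positive weight, $i \le \lfloor\lambda_r/2\rfloor$, satisfies $\tfrac{i-\alpha_1}{\lambda_r - 2\alpha_1 + 1} < \tfrac{1}{2}$, hence $\text{E}[\mathcal{N}_{i:\lambda_r}] = \phi^{-1}(\cdot) < 0$; negating the weighted sum then yields $S^{(g)} > 0$ (the sign persists under active weights, since there the negatively weighted indices carry positive order statistics). Next, the hypothesis $\text{E}[\mathcal{N}_{i:\lambda_r^{(g+1)}}] \le \text{E}[\mathcal{N}_{i:\lambda_r^{(g)}}]$ — which is exactly what $\textbf{m}^{(g+1)} \le \textbf{m}^{(g)}$ delivers through the shift relation $\text{E}[\mathcal{N}_{i:\lambda_r}] = \textbf{m} + \sigma\,\text{E}[N(\textbf{0},\textbf{I})]$ of Section~\ref{generalPSA} — upgrades term by term to $-w_i\,\text{E}[\mathcal{N}_{i:\lambda_r^{(g+1)}}] \ge -w_i\,\text{E}[\mathcal{N}_{i:\lambda_r^{(g)}}] \ge 0$, so that $S^{(g+1)} \ge S^{(g)} > 0$. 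Finally, $\mu_w^{(g+1)} - \mu_w^{(g)} \le \delta$ with $\delta$ small lets me treat the effective mass (and hence the weight vector) as a constant $\mu_w := \mu_w^{(g)} \approx \mu_w^{(g+1)}$ at the level of approximation used elsewhere in the paper, in the same spirit in which the normalisation factors $\gamma$ are set to $1$.

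The third step is the computation that exposes the threshold. Holding $\mu_w$ fixed, I would subtract the two expressions for $\rho$ over their common positive denominator; the numerator factors cleanly as
\[
\big(S^{(g+1)} - S^{(g)}\big)\Big[(n-1) - \mu_w\,S^{(g)}S^{(g+1)}\Big],
\]
so, since $S^{(g+1)} - S^{(g)} \ge 0$, the sign of $\rho(\lambda_r^{(g+1)}) - \rho(\lambda_r^{(g)})$ equals the sign of $(n-1) - \mu_w S^{(g)}S^{(g+1)}$ (equivalently $\rho' \propto (n-1) - \mu_w S^2$, so $\rho(\cdot)$ is increasing precisely on $S < \sqrt{(n-1)/\mu_w}$). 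This is the \emph{significance level for population size change} announced in the abstract: as long as $\mu_w S^{(g)}S^{(g+1)} \le n-1$, i.e. both $S^{(g)}$ and $S^{(g+1)}$ lie on the increasing branch of $\rho$ — the regime in which a sufficiently small population-size change keeps us — one gets $\rho(\lambda_r^{(g+1)}) \ge \rho(\lambda_r^{(g)})$ and therefore $\sigma_c^{(g+1)}/\sigma^{(g+1)} \ge 1$, as claimed.

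I expect the genuine obstacle to be exactly that inequality $\mu_w S^{(g)}S^{(g+1)} \le n-1$ from the third step: one must argue either that it holds automatically in the near-optimum operating regime, or make precise how the bound $\delta$ on the change in $\mu_w$ (hence in $\lambda_r$, hence in $S$) confines $S^{(g)},S^{(g+1)}$ to the increasing branch of $\rho$ — this is the ``safe passage to step-size correction'', and stating it sharply is the crux. A secondary, bookkeeping-type difficulty appears in the second step when $\lambda_r^{(g+1)} \ne \lambda_r^{(g)}$: the two weight vectors and the numbers of summands defining $S^{(g)}$ and $S^{(g+1)}$ then differ, so the termwise inequality has to be arranged over a common index range with the extra (negatively valued, positively weighted) parents of the larger population handled separately; one should also fix the precise reading of $\text{E}[\mathcal{N}_{i:\lambda_r}]$ — the shift form $\textbf{m} + \sigma\,\text{E}[N(\textbf{0},\textbf{I})]$ versus the scalar approximation of Eq.~\eqref{eq:Expected_NOS} — since the monotonicity of $S$ in $g$ is read off differently from each.
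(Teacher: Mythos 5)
Your reduction of the claim to $\rho\left(\lambda_r^{(g+1)}\right) \ge \rho\left(\lambda_r^{(g)}\right)$ and your algebra are sound — writing $S^{(g)} = -\sum_{i=1}^{\lambda_r}w_i\,\text{E}[\mathcal{N}_{i:\lambda_r^{(g)}}]$, the difference of the two values of \eqref{eq:step-size-correction_rho_eqn} with $\mu_w$ frozen does factor as $\left(S^{(g+1)}-S^{(g)}\right)\left[(n-1)-\mu_w S^{(g)}S^{(g+1)}\right]$ over a positive denominator. But this means your argument only yields the theorem \emph{conditionally} on $\mu_w S^{(g)}S^{(g+1)} \le n-1$, a condition that is not among the hypotheses of Theorem \ref{Theorem:Factor>1} and that you explicitly leave unproved ("the genuine obstacle"). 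Under your sign reading ($S^{(g+1)} \ge S^{(g)} > 0$, obtained by dropping the mean term from \eqref{eq:Expected_NOS}), this extra condition is genuinely indispensable, since your own factorization shows $\rho$ is not monotone in $S$; so the proposal stalls exactly at the step that would establish the stated conclusion. What you have is "if both generations lie on the increasing branch of $\rho$, then $\sigma_c^{(g+1)}/\sigma^{(g+1)} \ge 1$", which is a weaker, conditional statement rather than a proof of the theorem.

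The paper's own proof takes a different tack that avoids any threshold on $S$: it compares numerator and denominator of $\rho$ separately and pushes them in \emph{opposite} directions. From the hypothesis it obtains $\left(-\sum w_i \text{E}[\mathcal{N}_{i:\lambda_r^{(g+1)}}]\right) n\mu_w^{(g+1)} \ge \left(-\sum w_i \text{E}[\mathcal{N}_{i:\lambda_r^{(g)}}]\right) n\mu_w^{(g)}$ for the numerators, and — using $\mu_w^{(g+1)} \le \mu_w^{(g)}$ in the $\delta \to 0$ limit together with $\left(-\sum w_i \text{E}[\mathcal{N}_{i:\lambda_r^{(g+1)}}]\right)^2 \le \left(-\sum w_i \text{E}[\mathcal{N}_{i:\lambda_r^{(g)}}]\right)^2$ — a decrease of the denominators; chaining two fraction comparisons then gives $\rho^{(g+1)} \ge \rho^{(g)}$ directly. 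The decisive divergence from your route is precisely the point you flag in your closing remarks: the paper reads the order statistic in the shift form $\text{E}[\mathcal{N}_{i:\lambda_r}] = \mathbf{m} + \sigma\,\text{E}[N(\mathbf{0},\mathbf{I})]$, under which the decreasing mean makes the \emph{squared} weighted sums shrink, so the denominator inequality comes for free; your reading, which discards the mean shift, forces $S$ (and hence $S^2$) to grow, which is exactly what manufactures the obstacle you could not remove. So, as written, your proposal contains a genuine gap — the unverified inequality $\mu_w S^{(g)}S^{(g+1)} \le n-1$ — and is not a complete alternative proof, although your factorization is a sharp way of exposing where and why the sign/monotonicity bookkeeping matters.
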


	\begin{proof}
It is given in PSA-CMA-ES that,
	\begin{equation} \label{eq:line2}
	\text{E}[\mathcal{N}_{i:\lambda_{r}^{(g+1)}}] \quad \leq \quad \text{E}[\mathcal{N}_{i:\lambda_{r}^{(g)}}]. \notag
\end{equation}
It yields,
	\begin{equation} \label{eq:line3} 
	\sum_{i=1}^{\lambda_{r}}w_i \text{E}[\mathcal{N}_{i:\lambda_{r}^{(g+1)}}] \quad \leq \quad \sum_{i=1}^{\lambda_{r}}w_i \text{E}[\mathcal{N}_{i:\lambda_{r}^{(g)}}],
\end{equation}
and,
	\begin{equation} \label{eq:line5} 
	\left(-\sum_{i=1}^{\lambda_{r}}w_i \text{E}[\mathcal{N}_{i:\lambda_{r}^{(g+1)}}]\right)n\mu_{\text{w}}^{(g+1)} \quad \geq \quad \left(-\sum_{i=1}^{\lambda_{r}}w_i \text{E}[\mathcal{N}_{i:\lambda_{r}^{(g)}}]\right)n\mu_{\text{w}}^{(g)}.
\end{equation}
By \eqref{eq:line3} we have,
	\begin{equation} \label{eq:line} 
	\left( -\sum_{i=1}^{\lambda_{r}}w_i \text{E}[\mathcal{N}_{i:\lambda_{r}^{(g+1)}}]\right) ^2 \quad \leq \quad \left( -\sum_{i=1}^{\lambda_{r}}w_i \text{E}[\mathcal{N}_{i:\lambda_{r}^{(g)}}]\right) ^2. \notag
\end{equation}
When $\lim_{\delta \to 0}$, we have $\mu_{w}^{(g+1)} < \mu_{w}^{(g)} $ which yields,
	\begin{equation} \label{eq:line_} 
	\left( -\sum_{i=1}^{\lambda_{r}}w_i \text{E}[\mathcal{N}_{i:\lambda_{r}^{(g+1)}}]\right) ^2 \mu_{w}^{(g+1)} \quad \leq \quad \left( -\sum_{i=1}^{\lambda_{r}}w_i \text{E}[\mathcal{N}_{i:\lambda_{r}^{(g)}}]\right) ^2 \mu_{w}^{(g)}. \notag
\end{equation}
Since $(n-1)\geq 0$, we can write
	\begin{equation}\label{eqn: 15}
(n-1) +	\left( -\sum_{i=1}^{\lambda_{r}}w_i \text{E}[\mathcal{N}_{i:\lambda_{r}^{(g+1)}}]\right) ^2 \mu_{w}^{(g+1)} \quad \leq \quad (n-1) + \left( -\sum_{i=1}^{\lambda_{r}}w_i \text{E}[\mathcal{N}_{i:\lambda_{r}^{(g)}}]\right) ^2 \mu_{w}^{(g)}.
\end{equation}
From Eqs. \eqref{eq:line5} and \eqref{eqn: 15} we can derive,
\begin{equation}\label{eq 16}
\dfrac{	\left(-\sum_{i=1}^{\lambda_{r}}w_i \text{E}[\mathcal{N}_{i:\lambda_{r}^{(g+1)}}]\right)n\mu_{\text{w}}^{(g+1)} }{(n-1) +	\left( -\sum_{i=1}^{\lambda_{r}}w_i \text{E}[\mathcal{N}_{i:\lambda_{r}^{(g+1)}}]\right) ^2 \mu_{w}^{(g+1)}} \geq \dfrac{ \left(-\sum_{i=1}^{\lambda_{r}}w_i \text{E}[\mathcal{N}_{i:\lambda_{r}^{(g)}}]\right)n\mu_{\text{w}}^{(g)}}{(n-1) +	\left( -\sum_{i=1}^{\lambda_{r}}w_i \text{E}[\mathcal{N}_{i:\lambda_{r}^{(g+1)}}]\right) ^2 \mu_{w}^{(g+1)}},
\end{equation}
and
\begin{equation}\label{eq 17}
	\dfrac{ \left(-\sum_{i=1}^{\lambda_{r}}w_i \text{E}[\mathcal{N}_{i:\lambda_{r}^{(g)}}]\right)n\mu_{\text{w}}^{(g)}}{(n-1) +	\left( -\sum_{i=1}^{\lambda_{r}}w_i \text{E}[\mathcal{N}_{i:\lambda_{r}^{(g+1)}}]\right) ^2 \mu_{w}^{(g+1)}} \geq \dfrac{ \left(-\sum_{i=1}^{\lambda_{r}}w_i \text{E}[\mathcal{N}_{i:\lambda_{r}^{(g)}}]\right)n\mu_{\text{w}}^{(g)}}{(n-1) + \left( -\sum_{i=1}^{\lambda_{r}}w_i \text{E}[\mathcal{N}_{i:\lambda_{r}^{(g)}}]\right) ^2 \mu_{w}^{(g)}}.
\end{equation}
Then from Eqs. \eqref{eq 16} and \eqref{eq 17} we obtain,
\begin{equation}
\dfrac{	\left(-\sum_{i=1}^{\lambda_{r}}w_i \text{E}[\mathcal{N}_{i:\lambda_{r}^{(g+1)}}]\right)n\mu_{\text{w}}^{(g+1)} }{(n-1) +	\left( -\sum_{i=1}^{\lambda_{r}}w_i \text{E}[\mathcal{N}_{i:\lambda_{r}^{(g+1)}}]\right) ^2 \mu_{w}^{(g+1)}} \geq  \dfrac{ \left(-\sum_{i=1}^{\lambda_{r}}w_i \text{E}[\mathcal{N}_{i:\lambda_{r}^{(g)}}]\right)n\mu_{\text{w}}^{(g)}}{(n-1) + \left( -\sum_{i=1}^{\lambda_{r}}w_i \text{E}[\mathcal{N}_{i:\lambda_{r}^{(g)}}]\right) ^2 \mu_{w}^{(g)}},
\end{equation}
	i.e.,
\begin{equation} \label{eq:line9} 
	\rho\left(\lambda_{r}^{(g+1)}\right) \quad \geq \quad \rho\left(\lambda_{r}^{(g)}\right), \notag
\end{equation}
where $\rho$ is given in Eq. \eqref{eq:step-size_correction} concluding
\begin{equation} \label{eq:line10} 
	\frac{\rho\left(\lambda_{r}^{(g+1)}\right)}{\rho\left(\lambda_{r}^{(g)}\right)} \geq 1, \notag
\end{equation}
and with Eq. \eqref{eq:step-size_correction},
\begin{equation}
	\dfrac{\sigma_c^{(g+1)}}{\sigma^{(g+1)}} \geq 1, \notag
\end{equation}
i.e. the step-size correction ($\sigma_c$) increases.
	\end{proof}

According to the results of Theorem 1, the step-size progressively increases across successive generations, as each generation inherits its initial step-size from the amplified step-size of the previous generation, thereby hindering convergence to the global optimum. This increase in step-size takes place irrespective of the step-size adaptation in Eq.~\eqref{eq:5}. On the other hand, as stated in Eq. \eqref{eq:pop-size_boundary}, if $\lambda_r$ falls below the defined lower bound, it will remain constant across generations. Under this condition, the population size ensures that $\left( \mu_w^{(g+1)}-\mu_w^{(g)} \right) \to 0$, ultimately leading to a step-size blow-up. Since $\mu_w = g(\lambda)$ in general PSA-CMA-ES (see Algorithm \ref{alg:CMA-ES} lines 4-5), this leads to observing the blow-up more prominently when the difference in population size is insignificant, i.e. $\lambda^{(g+1)} - \lambda^{(g)}<L$ for some $L$ where $\delta=qL, \, q \in \mathbb{R}$. In the next section this threshold which makes population size change insignificant is discussed. 

\subsubsection{Analysis of population size change ($\Delta \lambda$)}

 In Theorem \ref{Theorem:Factor>1}, we showed that the step-size blows up when $\left( \mu_w^{(g+1)}-\mu_w^{(g)} \right) \to 0$. As discussed in the previous section, variance effective selection mass $\mu_w$ is a function of population size $\lambda$ in general PSA-CMA-ES (see Algorithm \ref{alg:CMA-ES} lines 4-5). Therefore, the step-size blow-up is prominent when the change in population size is insignificant, i.e.  $\lambda^{(g+1)} - \lambda^{(g)}<L$ for some $L$ where $\delta=qL, \, q \in \mathbb{R}$. In this section we determine the function $g(\cdot)$ by linking $\mu_w$ and $\lambda$ to derive the corresponding threshold $L$.

\begin{theorem}\label{Theorem:Delta_lambda} 
There exists a threshold $L \in \mathbb{N}$ for
\begin{equation} 
		\lambda^{(g+1)} - \lambda^{(g)} \leq L,
		 \end{equation} 
for some $L$ such that  $\delta = qL, \,q\in \mathbb{R}$ satisfying $\mu_{w}^{(g+1)} - \mu_{w}^{(g)} \leq \delta$.
\end{theorem}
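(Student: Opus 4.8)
The plan is to make the implicit map $g$ explicit and then reduce the claim to a bounded-difference (Lipschitz-type) estimate for $g$. Recall that in (PSA-)CMA-ES the positive recombination weights are deterministic functions of the population size: in the default scheme, with $\mu = \lfloor \lambda/2 \rfloor$, one takes $w_i' = \ln\frac{\lambda+1}{2} - \ln i$ for $i = 1,\dots,\mu$, normalizes via $w_i = w_i' / \sum_{j=1}^{\mu} w_j'$, and hence
\begin{equation}
\mu_w = g(\lambda) = \frac{1}{\sum_{i=1}^{\mu} w_i^{2}} = \frac{\left(\sum_{i=1}^{\mu} w_i'\right)^{2}}{\sum_{i=1}^{\mu}(w_i')^{2}}. \notag
\end{equation}
This is exactly the dependence encoded in Algorithm~\ref{alg:CMA-ES} (lines 4--5), so $\mu_w^{(g)} = g(\lambda^{(g)})$ and $\mu_w^{(g+1)} = g(\lambda^{(g+1)})$, and by Eq.~\eqref{eq:pop-size_boundary} both arguments are integers confined to the finite window $[\lambda_{\text{min}},\lambda_{\text{max}}]$.

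First I would record two structural facts about $g$ on $\{\lambda_{\text{min}},\dots,\lambda_{\text{max}}\}$: that it is non-decreasing (adding a candidate cannot lower the variance-effective selection mass), and that its one-step increments are uniformly bounded, $0 \le g(\lambda+1) - g(\lambda) \le q$, for an explicit constant $q$. The increment bound follows either from the standard asymptotics $g(\lambda) = \tfrac{\lambda}{4} + O(\ln\lambda)$, whose difference quotient tends to $\tfrac14$, supplemented by a finite check of the small cases; or, more cheaply, from the fact that Eq.~\eqref{eq:pop-size_boundary} keeps $\lambda$ inside the finite set $[\lambda_{\text{min}},\lambda_{\text{max}}]\cap\mathbb{N}$, on which one may simply take $q := \max_{\lambda_{\text{min}}\le\lambda<\lambda_{\text{max}}}\bigl(g(\lambda+1)-g(\lambda)\bigr)$. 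With such a $q$ the conclusion follows by telescoping,
\begin{equation}
\mu_w^{(g+1)} - \mu_w^{(g)} = \sum_{k=0}^{\lambda^{(g+1)} - \lambda^{(g)} - 1}\!\bigl(g(\lambda^{(g)}+k+1) - g(\lambda^{(g)}+k)\bigr) \le q\,\bigl(\lambda^{(g+1)} - \lambda^{(g)}\bigr), \notag
\end{equation}
so the hypothesis $\lambda^{(g+1)} - \lambda^{(g)} \le L$ forces $\mu_w^{(g+1)} - \mu_w^{(g)} \le qL =: \delta$ for every $L \in \mathbb{N}$ (and such an $L$ exists a fortiori, e.g.\ $L = \lambda_{\text{max}}-\lambda_{\text{min}}$, since the difference of clipped population sizes never exceeds this value). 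Choosing $L$ to be the ``insignificant'' change level discussed after Theorem~\ref{Theorem:Factor>1} then yields the statement with $\delta$ and $L$ linked through $\delta = qL$, $q \in \mathbb{R}$.

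The step I expect to be the main obstacle is obtaining a clean, $\lambda$-independent bound on the increment $g(\lambda+1) - g(\lambda)$: the floor in $\mu = \lfloor \lambda/2 \rfloor$ makes $\mu$ jump on every other step and the factor $\ln\frac{\lambda+1}{2}$ shifts all the $w_i'$ simultaneously, so the increment is not bounded by inspection and needs either asymptotic control of the sums $\sum_i w_i'$ and $\sum_i (w_i')^{2}$ (e.g.\ by comparison with $\int_{1}^{\mu}\ln\frac{\mu+1}{x}\,dx$ and its squared analogue together with Stirling's formula) or the finite-domain maximum above. Monotonicity of $g$ is a lesser technicality; if an elementary proof is awkward, it suffices to retain only the two-sided bound $|g(\lambda+1) - g(\lambda)| \le q$, after which the same telescoping with absolute values gives $|\mu_w^{(g+1)} - \mu_w^{(g)}| \le qL$ and Theorem~\ref{Theorem:Delta_lambda} still follows.
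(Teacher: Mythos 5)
Your argument is correct, but it reaches the conclusion by a genuinely different route than the paper. The paper does not work with the exact form of $g$ at all: it declares the analytic dependence of $\mu_w$ on $\lambda$ too complex, numerically fits a linear model $\mu_w = 0.2642\,\lambda + 0.5328$, substitutes this into $\mu_{w}^{(g+1)} - \mu_{w}^{(g)} \leq \delta$, and reads off $\lambda^{(g+1)} - \lambda^{(g)} \leq \delta/q$ with $q = 0.2642$; the link $\delta = qL$ is thus an empirical slope, and the implication is really an equivalence under the fitted linear model. You instead write $g$ explicitly from the weight definitions in Algorithm lines 4--5, $\mu_w = \left(\sum_i w_i'\right)^2/\sum_i (w_i')^2$, bound the one-step increment $g(\lambda+1)-g(\lambda)$ either asymptotically (your $\lambda/4$ slope is consistent with the paper's fitted $0.2642$) or by taking the maximum over the finite clipped range $[\lambda_{\text{min}},\lambda_{\text{max}}]$ guaranteed by Eq.~\eqref{eq:pop-size_boundary}, and then telescope to get $\mu_{w}^{(g+1)} - \mu_{w}^{(g)} \leq q\left(\lambda^{(g+1)} - \lambda^{(g)}\right) \leq qL$. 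This proves exactly the direction the paper actually uses afterwards (small $\lambda$-change forces small $\mu_w$-change and hence the blow-up regime of Theorem~\ref{Theorem:Factor>1}), and it does so without any curve fitting, at the cost of a less explicit constant $q$ unless you evaluate the finite-domain maximum; your fallback to the two-sided bound $|g(\lambda+1)-g(\lambda)|\leq q$ also correctly covers the case $\lambda^{(g+1)} < \lambda^{(g)}$, which the telescoping as first written implicitly assumes away. The paper's approach buys a concrete numerical $q$ (and hence a concrete $L$) for the two test problems, while yours buys rigor and portability to other dimensions and weight schemes.
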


\begin{proof} 
	
	From Theorem \ref{Theorem:Factor>1}, step-size blows up when $\mu_{w}^{(g+1)} - \mu_{w}^{(g)} \leq \delta$. Further, according to Algorithm 2, $\mu_w = g\left(\lambda\right) $. Due to the complexity of its analytical form in the algorithm,  $g\left( \cdot\right) $ was  numerically estimated using curve fitting. Accordingly, the best fit estimation yields the following linear function,
		\begin{equation}
		\mu_w = 0.2642 \, \lambda + 0.5328. \notag
	\end{equation}
		Substituting this result into $\mu_{w}^{(g+1)} - \mu_{w}^{(g)} \leq \delta$ yields,
	\begin{equation}
		\label{eq:proof2_sub2}
		0.2642 \, \left( \lambda^{(g+1)} - \lambda^{(g)}\right)  \leq \delta. \notag
	\end{equation}
	Simplifying,
	\begin{equation}
		\label{eq:proof2_sub3}
		\lambda^{(g+1)} - \lambda^{(g)} \leq \dfrac{\delta}{q}. \notag
	\end{equation}
	Let $q = 0.2642$. Then, 
	\begin{equation} 
		\lambda^{(g+1)} - \lambda^{(g)} \leq \dfrac{\delta}{0.2642}, \notag
	\end{equation}
resulting in,
	\begin{equation} 
	\label{eq: lambda<L}
		\lambda^{(g+1)} - \lambda^{(g)} \leq L, \notag
	\end{equation}
such that $\delta = qL$.\\

\noindent
	Therefore, an insignificant change in population size, specifically when $\lambda^{(g+1)} - \lambda^{(g)} \leq L$, results in $\mu_{w}^{(g+1)} - \mu_{w}^{(g)}$ to be sufficiently small causing a step-size blow-up as established in Theorem \ref{Theorem:Factor>1}.
  
\end{proof}

Thus we conclude that a blow-up in step size occurs due to the formulation of the step size correction mechanism when the algorithm is nearing a minimum (Theorem \ref{Theorem:Factor>1}) and the population size change is insignificant (Theorem \ref{Theorem:Delta_lambda}). In the next section we showcase numerical evidence to support these results. 

	
\subsection{Experimental Evidences} 	
\label{sec:Exp-Setting}

\begin{figure}[!h]
	\centering
	\includegraphics[width=1\linewidth]{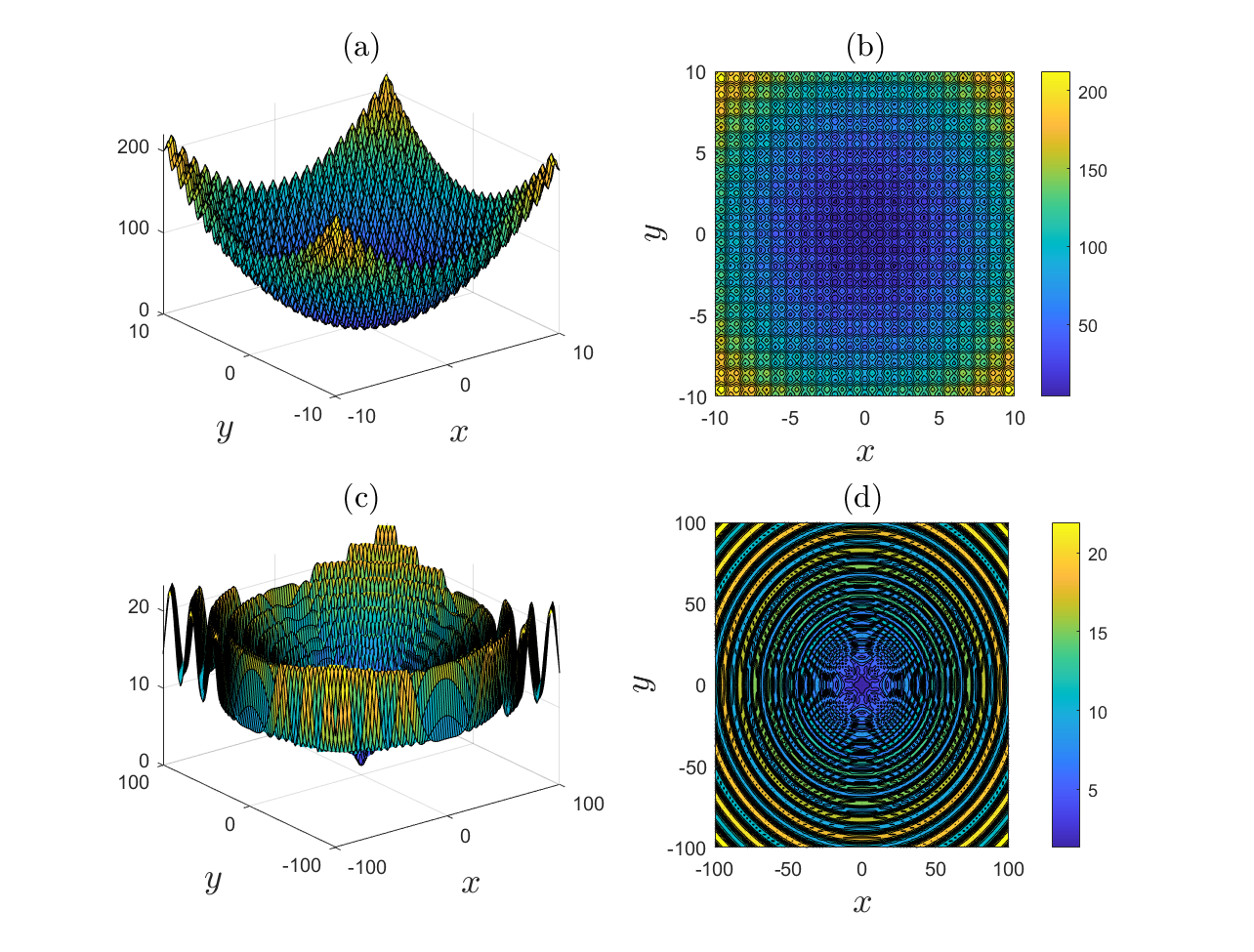}
	\caption{Visualization of the benchmark test functions; Rastrigin function is depicted in the upper panel (a), (b) and Schaffer function is depicted in the lower panel (c) and (d). The global optimum of each function (origin) are shown in the contour plots (b) and (d) respectively.}
	\label{fig:benchmarkgraphs}
\end{figure}

Two benchmark functions were used in this study to design a series of experiments to demonstrate how the experimental results align with our analytical findings. Both benchmark functions studied in this work are recognized for their complex optimization landscapes characterized by multimodality, continuity, non-separability, and nonconvexity \cite{PSA_2}. The analytical form of the two benchmark functions in its $n-$dimensional form in the domain of $x_i \in \left[ a,b\right] $ (see Fig.\ref{fig:benchmarkgraphs}) are given below.
\begin{align} 
	\label{eq:Rastrigin_function}
	f_R(\mathbf{x}) = 
	\sum_{i=1}^{n} \left( x_{i}^{2} + 10 \left( 1 - \cos(2\pi x_{i}) \right) \right),  x_i \in \left[ -10,10\right],
\end{align}

\begin{equation} 
	\label{eq:Schaffer_function}
	f_S(\mathbf{x}) = 
	\sum_{i=1}^{n-1} \left( \left( x_{i}^{2} + x_{i+1}^{2} \right)^{1/4} 
	\left( \sin^{2} \left( 50 \left( x_{i}^{2} + x_{i+1}^{2} \right)^{0.1} \right) + 1 \right) \right),  x_i \in \left[ -100,100\right],
\end{equation}
where $f_R$ and $f_S$ are respectively represent Rastrigin and Schaffer functions.

\subsubsection{Population size ($\lambda$) change  } \label{population change-exp}
The significance level $L$ (refer Eq. \eqref{eq: lambda<L}) for the differences between $\lambda_r^{(g)}$ and $\lambda_r^{(g+1)}$ that leads to a step-size blow-up was empirically determined through two controlled experiments (Experiments 1 and 2). To verify whether the step-size correction plays a role in $\sigma$ blow-up, the two controlled experiments were conducted under the conditions: with step-size correction (Experiment 1) and without step-size correction (Experiment 2).\\

 Experiment 1 and 2 were conducted for Rastrigin function and these results are depicted in Fig. \ref{fig:R-Exp-1.1,1.2,1.3} and \ref{fig:R-Exp-1.1,1.2,1.3_diff}. Experiment 1 systematically increased the population size $\lambda_r^{(g+1)}$ by known integer values as given in Eq. \eqref{4.3a}. Corresponding $\lambda^{(g)}$ and step-size values $\sigma^{(g)}$ are given in Fig. \ref{fig:R-Exp-1.1,1.2,1.3} (a). Differences in step-size $\Delta\sigma^{(g)}$ at each generation are given in Fig. \ref{fig:R-Exp-1.1,1.2,1.3_diff} (a). Based on these results for 2D Rastrigin function, $\Delta\sigma^{(g)}$ starts blowing up after generation $6$ when $\Delta\sigma^{(g)}\geq6$. To verify the results, the same experiement was repeated by decreasing the population size values as given in Eq. \eqref{4.3b}. Similarly, $\sigma^{(g)}$, $\Delta\sigma^{(g)}$ were recorded (see Fig. \ref{fig:R-Exp-1.1,1.2,1.3} (b), Fig. \ref{fig:R-Exp-1.1,1.2,1.3_diff} (b)). Accordingly, results verify that the step-size blow-up takes place when $\Delta\sigma^{(g)}\geq6$. Therefore it can be stated that $6$ is a significant level for the difference in population size, $\lambda^{(g+1)} - \lambda^{(g)}$ which causes step-size blow-up. This further clarifies that step-size blow up when the change in $\mu_w$ falls below $\delta = 1.5852$ according to Theorem \ref{Theorem:Factor>1}.
	\begin{subequations} \label{eq:lambda_update}
		\begin{align}
			\lambda_r^{(g+1)} &= \lambda_r^{(g)} + \sum_{i=1}^{g}i, \tag{4.3a} \label{4.3a} \\
			\lambda_r^{(g+1)} &= \lambda_r^{(g)} - \sum_{i=1}^{g}i. \tag{4.3b} \label{4.3b}
		\end{align}
	\end{subequations}

 	\begin{figure}[h!]
 		\centering
 		\includegraphics[width=1\linewidth]{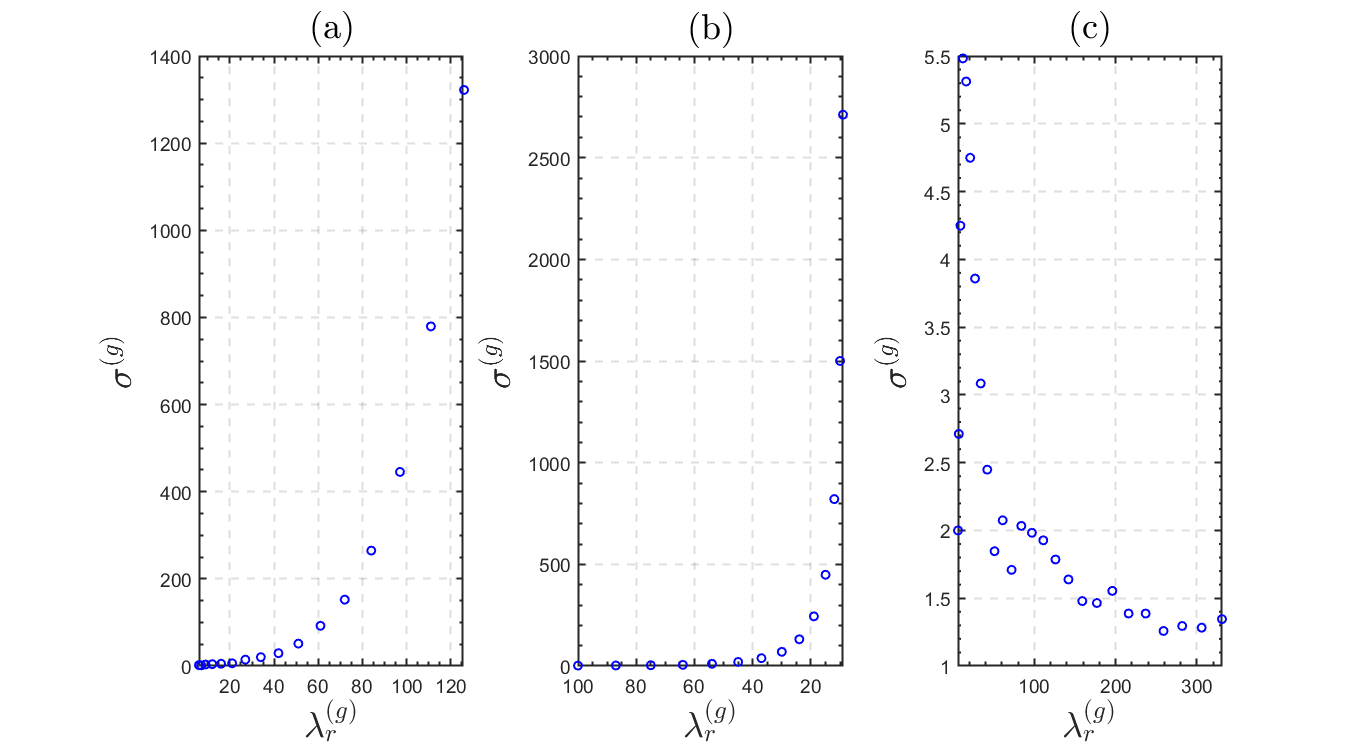}
 		\caption{Summary of the results across experiments 1 and 2 for the 2D Rastrigin function. Panels (a) and (b) depict the blow-up of step-size in Experiment 1, starting at generation \(6\rightarrow7\), with increasing and decreasing population sizes respectively. (c) shows the results of Experiment 2, highlighting the rapid decrease in step-size when the correction step is removed.}
 		\label{fig:R-Exp-1.1,1.2,1.3}
 	\end{figure}
 	
 	\begin{figure}[h!]
 		\centering
 		\includegraphics[width=1\linewidth]{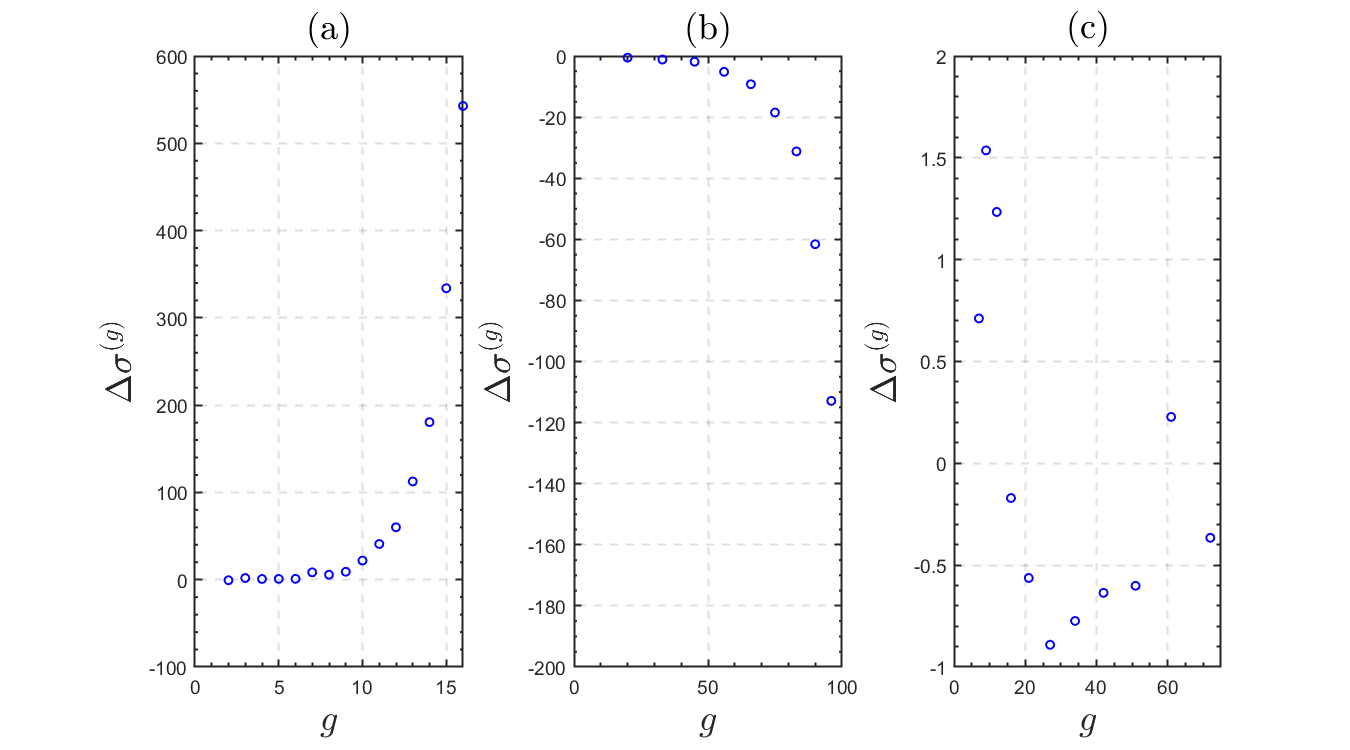}
 		\caption{Summary of the results across experiments 1 and 2 for the 2D Rastrigin function. Panels (a) and (b) depict the blow-up of step-size in terms of the step-size difference ($\Delta\sigma^{(g)}$) in Experiment 1, with increasing and decreasing population sizes respectively. (c) shows the results of Experiment 2, in terms of the step-size difference, highlighting the rapid decrease in step-size when the correction step is removed.}
 		\label{fig:R-Exp-1.1,1.2,1.3_diff}
 	\end{figure}

 	In experiment 2, the general PSA-CMA-ES algorithm was performed similar to experiment 1 without the step-size correction. Corresponding results of $\sigma^{(g)}$ against $\lambda^{(g)}$ and $\Delta \sigma^{(g)}$ against $g$ are depicted in Figs.  \ref{fig:R-Exp-1.1,1.2,1.3} and \ref{fig:R-Exp-1.1,1.2,1.3_diff} panel (c) respectively. In these results the step-size did not blow up as the population size $\lambda_{r}$ increased. These results were obtained for both Rastrigin and Schaffer functions in Eqs. \eqref{eq:Rastrigin_function}, \eqref{eq:Schaffer_function}. This confirmed that the step-size blow up observed in general PSA-CMA-ES was caused solely by the step-size correction mechanism, as the step-size rapidly decreases without it. However, at the same time these results show the importance of the step-size correction as well. As seen in Fig. \ref{fig:R-Exp-1.1,1.2,1.3} (c), the step-size decrease rapidly with the increase of population size resulting in higher likelihood of premature convergence. Therefore it is important to maintain the step-size correction in a safe scale. \\

\subsubsection{Optimizing step-size correction} \label{sec: optimize}

In the previous section, experiment 2 demonstrated completely removing the step-size correction fail to improve performance of the general PSA-CMA-ES algorithm (see Fig. \ref{fig:R-Exp-1.1,1.2,1.3} (c)). Thus it is important to maintain this correction at a safe scale effectively balancing exploration and exploitation while minimizing computational complexity. To toptimize this balance, we scaled the step-size correction of the existing PSA-CMA-ES algorithm with a parameter $\kappa \in \left( 0,1\right) $. The optimal value of $\kappa$ was determined using experimentally, specifically for 2D Rastrigin and Schaffer functions.\\

The PSA-CMA-ES was run with the initial mean, \(\textbf{m}^{(0)}\) that was uniformly sampled from a predefined interval (\(\mathrm{I}^{(0)}\)) to measure the performance metrics, (1) CPU time until termination (CPU time), (2) accuracy of the optimum reached $|f^*-f|$, and (3) average number of function evaluations per generation ($f_N$). At each run, step-size correction was scaled by a factor of $\kappa$ where $\kappa $ was varied from 0 to 1 with 0.1 increments. Here, $\kappa = 0$ is equivalent to having no step-size correction and $1$ is equivalent to the original PSA-CMA-ES. For these individual runs, two stopping criteria were considered; either (1) when the global minimum was reached within a tolerance of \(10^{-2}\) or (2) completed a predefined maximum number of generations - whichever occurred first. The inclusion of the second criterion was necessary to prevent the algorithm from becoming trapped in loops, which occurred when the mean value crossed the problem's search boundary as a result of step-size blow-up caused by step-size correction. The values of each performance metric are shown in Fig. \ref{fig:kappaexperiment}. To find the optimum scaling $\kappa$, an overall performance metric, $S_f$ was calculated by giving uniform weights across each metric,
\begin{align}
	S_f = \text{CPU time} + |f^*-f|+f_N. \label{eq:S_f_performance_metric}
\end{align}
 The detailed values of these results are presented in appendix (see Table  \ref{table:RastriginEmpricalAllruns} and \ref{table:SchafferEmpricalAllruns}). For both functions the optimum $\kappa$ was found to be $0.5$. However, it should be noted that this value may change for other functions.

\begin{figure}[!hbt]
	\centering
	\includegraphics[width=1\linewidth]{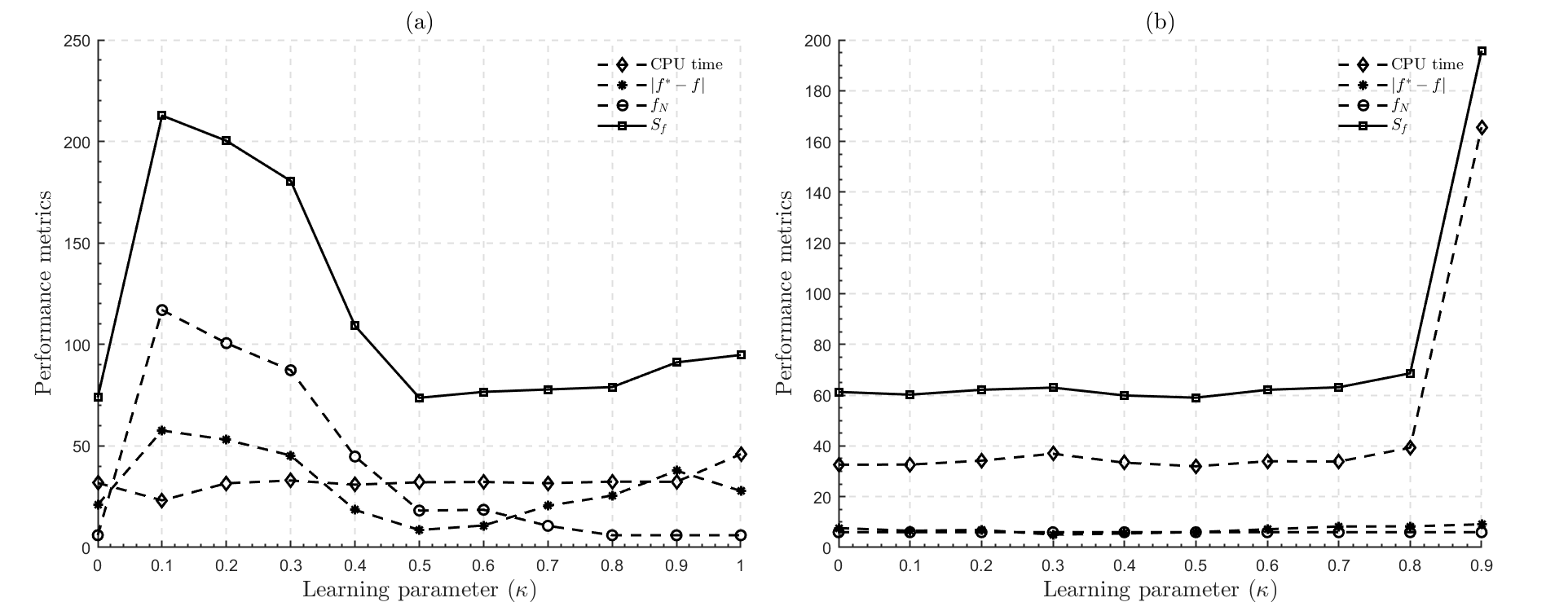}
	\caption[Performance for each \(\kappa\)]{
		The values of observed performance metrics for each \(\kappa\) value ranging from $0$ to $1$ at $0.1$ increments. Panels (a) and (b) show the performance on the Rastrigin and Schaffer functions respectively. The dotted lines are drawn for the ease of following the trend and does not represent a connection. Four performance metrics were considered (1) CPU time until termination represented by diamonds, (2) accuracy of the optimum reached $|f^*-f|$ represented by asterisks, (3) average number of function evaluations per generation ($f_N$) represented by circles and finally (4) the overall performance metric $S_f$ (refer Eq. \ref{eq:S_f_performance_metric}) represented by squares. For comparison, the number of generations completed was fixed at $15$ for each independent run. Notably, the lowest value for all metrics was observed at \(\kappa = 0.5\), corresponding to the minimum $S_f$ value.}
	\label{fig:kappaexperiment}
\end{figure}

\subsection{Reformulated step-size correction}
       \label{sec:Reformulation}
This section introduces the reformulated step-size correction mechanism addressing the shortcomings of the general PSA-CMA-ES argued in the previous sections. The step-size correction is reformulated to improve the algorithm's convergence when nearing the global minimum by preventing the step-size blow-up discussed, while simultaneously improving exploration of the search space and
 maintaining low computational complexity. \\

 In PSA-CMA-ES, the step-size correction applied uniformly across all cumulative step-size correlations; (a) anti-correlated, (b) corelated,  and (c) uncorrelated (see Eq. \eqref{eq:5}) \cite{PSA_2,PSA_3}. However, in the reformulation, step-size correction is applied selectively taking into account the step-size adaptation. That is, the correction is only applied when the cumulative steps are anti-correlated (Alg. \ref{alg:CMA-ES} line 19), i.e when the adaptation demands the algorithm to decrease the step-size when nearing an optimal.  In that, the correction is applied conditionally based on population size change.  Specifically, the step-size is scaled only if the population size change is insignificant to avoid blow-up as illustrated in previous sub-sections. This scaling mechanism was motivated by analytical findings (Theorem \ref{Theorem:Factor>1}). It showed step-size blow-up occurs when $\lambda^{(g+1)} - \lambda^{(g)} \leq {L}$, since the resulting change in $ \mu_w$ remains within the small bound $\delta$ (Theorem \ref{Theorem:Factor>1}). Hence, the scaling mechanism in the reformulation prevents excessive growth in the step-size that occurs when the population size change is below ${L}$. If this criterion is met, the step-size correction is scaled in the reformulation by a scaling parameter $\kappa$, allowing only a fraction of the step-size correction. When the population size change was not insignificant i.e. $\lambda^{(g+1)} - \lambda^{(g)} > {L}$, the original step-size correction was applied without scaling. The proposed reformulation is outlined in Algorithm \ref{alg:Reformulation:Case_1}. The new PSA-CMA-ES algrithm with the reformulation is given in Algorithm \ref{alg:CMA-ES}. The definitons of the parameters in the reformormulated PSA-CMA-ES are given in Table \ref{tab:definitions}. The input parameters of the reformulated algorithm (Alg. \ref{alg:CMA-ES}) are $\mathbf{m}^{(0)}, \sigma^{(0)}, \kappa, \text{ and } {L}$. In that, $\mathbf{m}^{(0)}, \sigma^{(0)}$ are the mean and step-size of the initial candidate population (see Table \ref{tab:init}). The parameters $\kappa$ was experimentally found and discussed in Section \ref{sec: optimize}.

\SetKwInput{KwInput}{Input}
\SetKwInput{KwOutput}{Set}
\SetKwInput{KwResult}{Initialize}

\begin{algorithm}[h!]
	\caption{Step-Size Correction Reformulation}\label{alg:Reformulation:Case_1}
	\eIf{\(||\textbf{p}_{\sigma}|| \geq E\left[||N(\textbf{0},\textit{I})||\right]\)}
	{
		\(\sigma_c^{(g+1)} = \sigma^{(g+1)}\),
	}{
		\eIf{\(|\lambda^{(g+1)} - \lambda^{(g)}|< {L}\)}
		{\(\sigma_c^{(g+1)} = \sigma^{(g+1)} \cdot  \kappa \, \frac{\rho\left(\lambda^{(g+1)}_{r}\right)}{\rho\left(\lambda^{(g)}_{r}\right)}\),}
		{\(\sigma_c^{(g+1)} = \sigma^{(g+1)} \cdot \frac{\rho\left(\lambda^{(g+1)}_{r}\right)}{\rho\left(\lambda^{(g)}_{r}\right)}\).}
	}
\end{algorithm}
\begin{table}[ht!]
	\centering
	\caption{Definition of parameters used in the reformulated algorithm (Alg. \ref{alg:CMA-ES} ) } \label{tab:definitions}
	\begin{tabular}{ll}
		Parameter    & Definition                                                                    \\ \hline
		$\mathbf{m}$ & Mean of the distribution                                                       \\
		$\sigma$     & Step-size of the distribution                                                  \\
		$\sigma_c$ & Step-size after correction \\
		$\mathrm{I}^{(0)}$  & Initial interval from which $\mathbf{m}^{(0)}$ and $\sigma^{(0)}$ are sampled \\
		$\kappa$     & Scaling parameter for the reformulated step-size correction                    \\
		${L}$ & Significance level for the population size change       \\
		$\mu$ & Best fitness $\lfloor \lambda_r/2 \rfloor$ population (parent population)\\
		$w_i$ & Weights of $i-$th best candidate \\
		$g_\text{max}$ & Maximum number of generations completed in one algorithm run\\
		$\mu_w$ & Variance effective selection mass
	\end{tabular}
\end{table}

\begin{table}[ht!]
	\centering
	\caption{Details of the initial candidate populations} \label{tab:init}
	\begin{tabular}{lll}
		Parameter       & Rastrigin function                   & Schaffer function                  \\ \hline
		$ \mathrm{I}^{(0)} $  & $ [1,5]$     & $[10,100]$   \\
		$\mathbf{m}^{(0)}$  & $\sim$ Uni$ \left(\mathrm{I}^{(0)} \right) $              & $\sim$ Uni$ \left(\mathrm{I}^{(0)} \right) $       \\
		$\sigma^{(0)}$ & $\mathrm{I}^{(0)}/2 $ & $\mathrm{I}^{(0)}/2 $ 
	\end{tabular}
\end{table}
In summary, this reformulation ensured the step-size correction aligned with the direction suggests by the step-size adaptation mechanism in the PSA-CMA-ES. If the adaptation indicated a need for continued exploration (i.e., step-size increase with $||\textbf{p}_{\sigma}||\geq \text{E}[||N(\textbf{0},\textbf{I})||]$), then the  step-size correction step was omitted. On the other hand if the adaptation suggested a convergence (i.e., step-size decrease) then the step-size correction was applied—but scaled down if the population size change was insignificant. This selective approach prevented both premature convergence and continuous step-size growth leading to a blow-up. In the next section, we demonstrate how this reformulated PSA-CMA-ES algorithm outperformed the general PSA-CMA-ES algorithm.

\subsection{Performance of reformulated PSA-CMA-ES algorithm}
	\label{sec:Implementation_Results}
	
In this section, we showcase the performance of the reformulated PSA-CMA-ES algorithm (Alg. \ref{alg:CMA-ES}) in comparison to the general PSA-CMA-ES algorithm (Section \ref{generalPSA}). The two 2-dimensional benchmark functions ($f$): Rastrigin and Schaffer, borrowed from \cite{PSA_2} (see Eqs. \eqref{eq:Rastrigin_function},\eqref{eq:Schaffer_function}) were tested to evaluate the reformulation. These evaluations are conducted based on three performance metrics; (1) CPU time until termination (CPU), (2) accuracy of the optimum reached $|f^*-f|$, and (3) average number of function evaluations per generation ($f_N$). \\

	
	 The global minimum, \(\mathbf{x}^{*}\), of the Rastrigin function is at \((0,0)\) and \(f(\mathbf{x}^{*}) = 0\). The algorithm was initialized using the values in Table \ref{tab:init} following \cite{PSA_3}. To compare with the original PSA-CMA-ES algorithm, the reformulation was limited to \(20\) generations on the 2D Rastrigin function.	The three performance metrics for the two algorithms are compared in Fig. \ref{fig:performancecomparison}. The results show a significant reduction in CPU time for the reformulated PSA-CMA-ES algorithm compared to the general PSA-CMA-ES (average of 20 runs were 33.1779 and 116.5880 respectively), because it is prone to getting stuck in loops, which would lead to even higher CPU times if run for additional generations. In addition to better computational efficiency, the reformulated algorithm consistently produced optimum value closer to \(f(\mathbf{x}^{*}) = 0\) indicating superiority in convergence ($|f^*-f| $ is 34.0996 and 12.8041 for general and reformulation respectively). Although the reformulation leads to a higher number of function evaluations per generation (approximately average of 327 for reformulation and 6 for general)—due to increased population sizes from selective step-size corrections—it nevertheless achieves superior performance in terms of CPU time.\\

Similarly, the experiments were carried out for the Schaffer function with initial values provided in Table \ref{tab:init} following \cite{PSA_3}. The function's global minimum \(\mathbf{x}^{*}\) is located at \((0,0)\) with \(f(\mathbf{x}^{*}) = 0\). To compare with the original PSA-CMA-ES algorithm, the reformulation  was limited to \(10\) generations. This value was less compared to Rastrigin due to its complexity of the Schaffer. Fig. \ref{fig:performancecomparison} illustrates the results of  performance metrics obtained from 20 independent runs with each run completing \(10\) generations. Similar to results on the 2D Rastrigin function, the reformulation shows a significant reduction in CPU time (average of 20 runs were $22.4839$ and $57.9054$ respectively) highlighting the drawback of getting stuck in loops in the general PSA-CMA-ES. Additionally, the reformulation consistently produced optimum values closer to \(f(\mathbf{x}^{*}) = 0\), indicating superior convergence ($|f^*-f| $ is $9.3576$ and $7.1450$ for general and reformulation respectively). The number of function evaluations was similar for both the general PSA-CMA-ES and the reformulation, indicating that the selective step-size correction did not lead to any increase in population size (approximately average of 6 for both algorithms).\\
 
Considering the overall performance of both benchmark functions, the reformulated algorithm acts superior to the general PSA-CMA-ES with respect to the CPU time and convergence. Although the function evaluations may increase in the reformulation, it supports the convergence to the global optimum. However, it is important to trace the function evaluations when the complexity of the problem demands increased number of generations in a run.

	\begin{figure}[!h]
		\centering
		\includegraphics[width=1\linewidth]{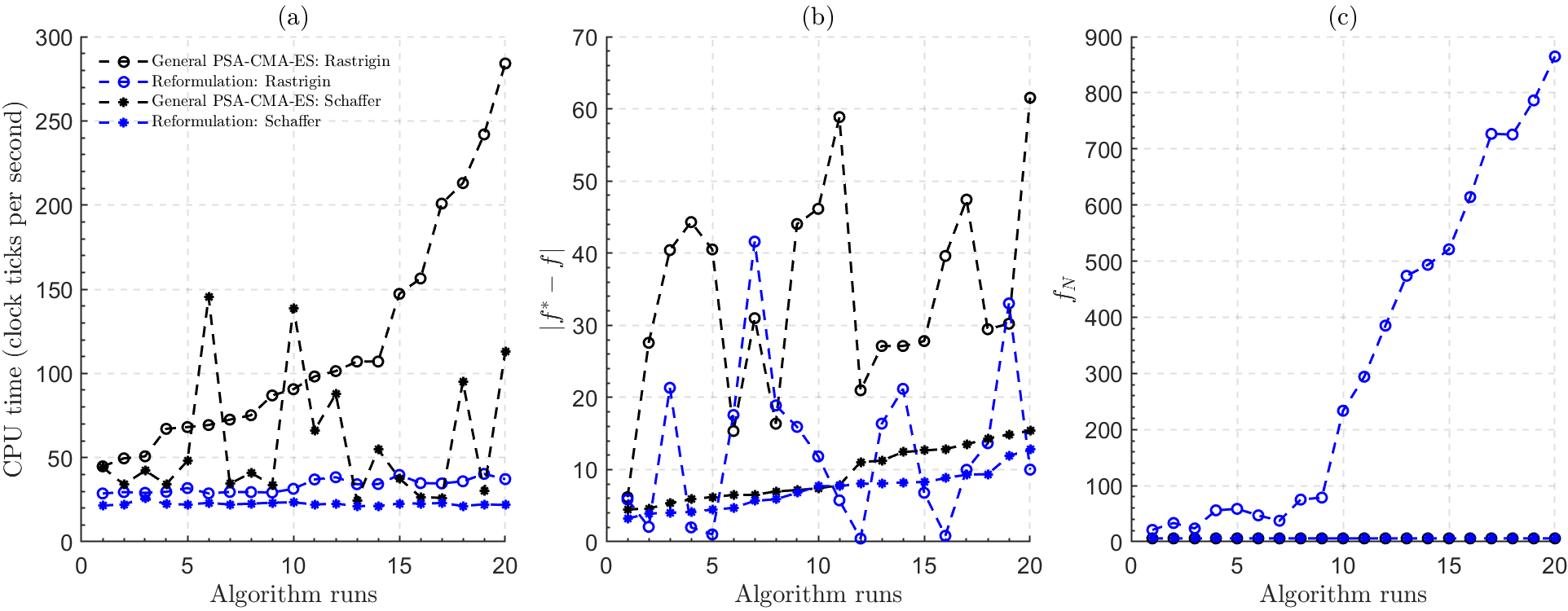}
		\caption[Performance comparison of the Reformulation against General PSA-CMA-ES ]{Performance comparison of the Reformulation against General PSA-CMA-ES on the 2D Rastrigin and 2D Schaffer functions over 20 independent runs. For the Rastrigin function, the algorithm completed 20 generations per run, while for the Schaffer function, it completed 10 generations per run. Results on the Reformulation and General PSA-CMA-ES are indicated in blue and black respectively. Circles are used to represent results on the Rastrgin function wile asterisk marks are used for the Schaffer function. The values obtained are connected by dotted lines to easily follow the trend and does not represent a connection.}
		\label{fig:performancecomparison}
	\end{figure}

	\section{Conclusion and Future Works}
	\label{sec:Conclusion}
	
	Multimodal optimization relies on a balance between exploration to locate promising regions and exploitation to refine solutions within them. The PSA-CMA-ES algorithm achieves this balance by dynamically adjusting population size. While it performs well on well-structured problems, it struggles with weakly-structured ones due to ineffective step-size control, leading to step-size blow-up and poor convergence near the global optimum.\\
	
	In this study, we introduced a reformulation of the step-size correction mechanism in the PSA-CMA-ES algorithm to improve global search and convergence on weakly-structured multimodal problems (refer Algorithm \ref{alg:Reformulation:Case_1} for the reformulation). The goal was to improve convergence at the global optimum while reducing CPU time complexity. To achieve this, we analytically studied the step-size correction mechanism of general PSA-CMA-ES and identified the cause for the step-size blow-up (Theorem \ref{Theorem:Factor>1}) and showed the existence of a significance level for population size change (refer Theorem \ref{Theorem:Delta_lambda} and Figs \ref{fig:R-Exp-1.1,1.2,1.3}, \ref{fig:R-Exp-1.1,1.2,1.3_diff}). The reformulation was guided by the insights gained from this analysis. The reformulated algorithm outperformed the general PSA-CMA-ES by achieving better optimum values with reduced CPU time. This improvement was demonstrated through experiments on the 2D Rastrigin and Schaffer functions (refer Fig. \ref{fig:performancecomparison}). Notably, while CPU time is inherently lower in two dimensions, the results suggest that as the problem complexity increases in higher dimensions, the reformulation continues to offer significant reductions in CPU time, along with improved convergence at the optimum, compared to the general PSA-CMA-ES.\\

	Further, the reformulation better exploited the population size adaptation by increasing the population size which in turn improved the search space exploration. This increase led to increases in the number of function evaluations per generation as especially seen on the Rastrigin function (Fig. \ref{fig:performancecomparison} (c)), but nevertheless the CPU time taken was still lower than that of general PSA-CMA-ES. Moreover, the reformulation allowed for better exploration by aligning step-size changes more closely with the direction suggested by cumulative step-size adaptation. The convergence was improved by including a selective version of the step-size correction step. Together, these changes led to improved performance final optimum reached and computational efficiency. \\

	One noticeable unfavorable feature of the introduced reformulation was the continuous increase in population size, especially on the Rastrigin function, even after convergence had begun. This increase in population size led to an increased number of function evaluations. Ideally, once the algorithm identifies an optimum and the step-size begins to decrease, the population size should also reduce with the focus shifting from exploration to exploitation. Therefore, introducing a population size scaling mechanism presents a valuable direction for future work. This refinement could prevent unnecessary function evaluations during convergence and further improve overall efficiency. The promising performance of the reformulation on weakly-structured multimodal problems indicates the need for continued investigation in this avenue.  \\

	\section*{Author contributions}
	
	\textbf{Chandula Fernando:} Conceptualization, Methodology, Software, Validation, Investigation, Writing - original draft, Visualization.
	\textbf{Kushani De Silva:} Conceptualization, Methodology, Writing - Original Draft, Writing - Review \& Editing, Supervision
		
%
%
%
%


	\bibliographystyle{apalike}
	\bibliography{Bibliography.bib}
	
	\newpage
		\section{Appendix}
	
	\RestyleAlgo{ruled}
	\LinesNumbered
	\begin{algorithm}[ht!]
		\caption{PSA-CMA-ES with Reformulated Step-size Correction}\label{alg:CMA-ES}
		\KwInput{\(\textbf{m}^{(0)} \in \mathbb{R}^{n}, \sigma^{(0)} \in \mathbb{R}_{+}\), $\kappa$, $\mathrm{L}$, $a,b$, $g_{\text{max}}, \epsilon$}
		\KwOutput{\(c_{m} = 1, \sum_{i=1}^{\mu}w_i = 1, \alpha = 1.4, \beta = 0.4, \lambda_{\text{min}} = \lambda_{\text{def}}, \lambda_{\text{max}} = 512 \, \lambda_{\text{def}}\)}
		\KwResult{\(\textbf{C}^{(0)} = \textbf{I}, \textbf{p}_{c}^{(0)} = \mathbf{0}, \textbf{p}_{\sigma}^{(0)} = \mathbf{0}, \textbf{p}_{\theta}^{(0)} = \mathbf{0}, \gamma_{c}^{(0)} = 0, \gamma_{\sigma}^{(0)} = 0, \gamma_{\theta}^{(0)} = 0, \lambda^{(0)} = \lambda_{r}^{(0)} = \lambda_{\text{def}}, g=0\)}
		
		\While{ $g<g_{\text{max}}$ or $\left|f\left( \textbf{m}^{(g+1)}\right) - f\left( \textbf{m}^{(g)}\right) \right| >\epsilon $}{
			\footnotesize\ttfamily{//(re-)compute parameters depending on \(\lambda_r\)}\\
			\(\mu \leftarrow \lfloor \lambda_{r}/2 \rfloor\)\\
			\(w_{i} \leftarrow \dfrac{\ln(\frac{\lambda+1}{2}) - \ln(i)}{\sum_{i=1}^{\mu} [\ln(\frac{\lambda+1}{2}) - \ln(i)]} \text{ for } i = 1,\dots,\mu; \text{ and } 0 \text{ otherwise} \)\\
			\(\mu_{w} \leftarrow 1/\sum_{i=1}^{\mu} w_{i}^{2}\)\\   
			\(c_{\sigma} \leftarrow (\mu_{w} + 2)/(n + \mu_{w} + 5)\)\\   
			\(d_{\sigma} \leftarrow 1 + 2\max(0,\sqrt{(\mu_{\text{eff}}-1)/(n+1)} - 1) + c_{\sigma}\)\\   
			\(c_{c} \leftarrow (4 + \mu_{\text{eff}}/n)/(n+4+2\mu_{\text{eff}}/n)\)\\ 
			\(c_{1} \leftarrow 2/((n+1.3)^{2} + \mu_{\text{eff}})\)\\
			
			\(\textbf{x}^{(g+1)}_i \sim \textbf{m}^{(g)} + \sigma^{(g)}N(\textbf{0},\textbf{C}^{(g)}), \text{ for } i=1,...,\lambda.\)\\
			\footnotesize\ttfamily{Perform CMA-ES iteration (\ref{eq:2}), (\ref{eq:5}) and (\ref{eq:cm_update}) of Section \ref{sec:PSA-CMA-ES}}\\
			
			\(\textbf{m}' \leftarrow \textbf{m}, \textbf{C}' \leftarrow \textbf{C}, \sigma' \leftarrow \sigma\);  \footnotesize\ttfamily{//Keep old values}\\
			
			\footnotesize\ttfamily{Update evolution paths  (\ref{eq:4}), (\ref{eq:pc}) and (\ref{eq:p_theta}) of Section \ref{sec:PSA-CMA-ES}}\\
			
			\(\gamma^{(g+1)}_{\sigma} = (1-c_{\sigma})^{2}\gamma^{(g)}_{\sigma} + c_{\sigma}(2 - c_{\sigma})\); \footnotesize\ttfamily{//Update normalization factors}\\
			\(	\gamma^{(g+1)}_{c} = (1-c_{c})^{2}\gamma^{(g)}_{c} + h^{(g+1)}_{\sigma} c_{c}(2 - c_{c})\)\\
			\(\gamma^{(g+1)}_{\theta} = (1 - \beta)^{2}\gamma^{(g)}_{\theta} + \beta(2 - \beta)\)\\
			
			\footnotesize\ttfamily{Update population size (\ref{eq:pop-size_update}), and (\ref{eq:pop-size_boundary}) of Section \ref{sec:PSA-CMA-ES}}\\
			
			\footnotesize\ttfamily{//Reformulated step-size correction}\\
			\eIf{\(||\textbf{p}_{\sigma}|| \geq \mathbb{E}\left[||\mathcal{N}(\textbf{0},\textit{I})||\right]\)}{
				\(\sigma^{(g+1)} = \sigma^{(g+1)}\);
			}{
				\eIf{\(|\lambda^{(g+1)} - \lambda^{(g)}|< \mathrm{L}\)}{
					\(\sigma^{(g+1)} = \sigma^{(g+1)} \cdot \kappa \frac{\rho\left(\lambda^{(g+1)}_{r}\right)}{\rho\left(\lambda^{(g)}_{r}\right)}\)
				}{
					\(\sigma^{(g+1)} = \sigma^{(g+1)} \cdot \frac{\rho\left(\lambda^{(g+1)}_{r}\right)}{\rho\left(\lambda^{(g)}_{r}\right)}\); \\
					\footnotesize\ttfamily{ Here \(\rho\) is a function of \(\lambda^{(g)}_{r}\) and \(\rho(\lambda^{(g)}_{r}) = \frac{\left(-\sum_{i=1}^{\lambda_{r}}w_i \mathbb{E}[\mathcal{N}_{i:\lambda_{r}}]\right)\cdot n \cdot \mu_{\text{w}}}{n-1+\left(\left(-\sum_{i=1}^{\lambda_{r}}w_i \mathbb{E}[\mathcal{N}_{i:\lambda_{r}}]\right)^{2}\mu_{\text{w}}\right)}\)}
				}
			}
			\(g \leftarrow g+1\)
			
		}
	\end{algorithm}

\begin{table}[hbt!]
	\centering
	\caption{Implementation Results Comparison for Rastrigin Function}
	The columns give the CPU time taken, the minimum value found by the algorithm \textit{(val)}, average number of function evaluations per generation \textit{(Avg \# f.eval)}, and number of generations completed \textit{(\# gens)} for the general PSA-CMA-ES algorithm and the Reformulation on the Rastrigin function.
	\label{table:RastriginComparison}
	\resizebox{0.99\textwidth}{!}{
		\begin{tabular}{|l |l  |l |l |l |l  |l |l |l|} \hline  
			Function & \multicolumn{4}{|c|}{Rastrigin} & \multicolumn{4}{|c|}{Rastrigin} \\ 
			Algorithm & \multicolumn{4}{|c|}{General PSA-CMA-ES} & \multicolumn{4}{|c|}{Reformulation} \\ \hline 
			
			Run \# & CPU time & val & Avg \# f.eval & \# gens & CPU time & val & Avg \# f.eval & \# gens \\ \hline 
			
			1 & 44.7609 & 6.1721 & 6 & 20 & 28.6819 & 5.8601 & 21.14 & 20 \\ 
			2 & 49.5271 & 27.5756 & 6 & 20 & 29.4904 & 2.0655 & 33.38 & 20 \\ 
			3 & 50.7356 & 40.4244 & 6 & 20 & 29.0622 & 21.3385 & 23.71 & 20 \\ 
			4 & 67.0880 & 44.3008 & 6 & 20 & 29.6653 & 1.9913 & 55.95 & 20 \\ 
			5 & 68.1091 & 40.5126 & 6 & 20 & 31.8646 & 1.0115 & 58.67 & 20 \\ 
			6 & 69.3169 & 15.3446 & 6 & 20 & 28.7257 & 17.5701 & 47.00 & 20 \\ 
			7 & 72.5328 & 30.9816 & 6 & 20 & 29.5821 & 41.5951 & 37.43 & 20 \\ 
			8 & 75.1418 & 16.3570 & 6 & 20 & 29.4873 & 18.8904 & 75.14 & 20 \\ 
			9 & 86.9018 & 44.0385 & 6 & 20 & 29.2569 & 15.9245 & 78.71 & 20 \\ 
			10 & 90.5863 & 46.1390 & 6 & 20 & 31.4461 & 11.8287 & 233.48 & 20 \\ 
			11 & 98.1587 & 58.8558 & 6 & 20 & 36.9983 & 5.7348 & 294.00 & 20 \\ 
			12 & 101.3522 & 20.9890 & 6 & 20 & 38.3244 & 0.4477 & 385.10 & 20 \\ 
			13 & 107.0312 & 27.1279 & 6 & 20 & 34.1822 & 16.3639 & 473.90 & 20 \\ 
			14 & 107.0312 & 27.1279 & 6 & 20 & 34.2596 & 21.1906 & 493.43 & 20 \\ 
			15 & 147.2240 & 27.8108 & 6 & 20 & 39.6896 & 6.7894 & 521.10 & 20 \\ 
			16 & 156.3896 & 39.6093 & 6 & 20 & 34.8164 & 0.8307 & 614.19 & 20 \\ 
			17 & 200.8323 & 47.4182 & 6 & 20 & 34.6502 & 9.9742 & 726.81 & 20 \\ 
			18 & 213.0240 & 29.4511 & 6 & 20 & 35.8938 & 13.6550 & 725.62 & 20 \\ 
			19 & 241.9549 & 30.2180 & 6 & 20 & 40.2964 & 33.0261 & 786.43 & 20 \\ 
			20 & 284.0622 & 61.5379 & 6 & 20 & 37.1841 & 9.9931 & 864.67 & 20 \\ \hline
			Average & 116.5880 & 34.0996 & 6 & 20 & 33.1779 & 12.8041 & 327.49 & 20 \\ \hline
		\end{tabular}
	}
\end{table}

\begin{table}[hbt!]
	\centering
	\caption{Implementation Results Comparison for Schaffer Function}
	The columns give the CPU time taken, the minimum value found by the algorithm \textit{(val)}, average number of function evaluations per generation \textit{(Avg \# f.eval)}, and number of generations completed \textit{(\# gens)} for the general PSA-CMA-ES algorithm and the Reformulation on the Schaffer function.
	\label{table:SchafferComparison}
	\resizebox{0.99\textwidth}{!}{
		\begin{tabular}{|l |l  |l |l |l |l  |l |l |l|} \hline  
			Function & \multicolumn{4}{|c|}{Schaffer} & \multicolumn{4}{|c|}{Schaffer} \\ 
			Algorithm & \multicolumn{4}{|c|}{General PSA-CMA-ES} & \multicolumn{4}{|c|}{Reformulation} \\ \hline 
			
			Run \# & CPU time & val & Avg \# f.eval & \# gens & CPU time & val & Avg \# f.eval & \# gens \\ \hline 
			
			1 & 44.2190 & 4.4849 & 6 & 10 & 21.4575 & 3.2307 & 6.00 & 10 \\ 
			2 & 33.9783 & 4.5952 & 6 & 10 & 22.2347 & 3.9037 & 6.00 & 10 \\ 
			3 & 42.2517 & 5.3593 & 6 & 10 & 26.0655 & 4.0299 & 6.00 & 10 \\ 
			4 & 34.1905 & 5.8829 & 6 & 10 & 22.3545 & 4.1181 & 6.00 & 10 \\ 
			5 & 48.0390 & 6.2027 & 6 & 10 & 22.1349 & 4.4548 & 6.00 & 10 \\ 
			6 & 145.5743 & 6.4844 & 6 & 10 & 23.0961 & 4.6786 & 6.00 & 10 \\ 
			7 & 34.4582 & 6.5036 & 6 & 10 & 22.1498 & 5.6671 & 6.00 & 10 \\ 
			8 & 40.8872 & 6.9750 & 6 & 10 & 22.6180 & 5.8858 & 6.00 & 10 \\ 
			9 & 33.7059 & 7.1600 & 6 & 10 & 23.2139 & 6.7838 & 6.00 & 10 \\ 
			10 & 138.8508 & 7.4068 & 6 & 10 & 23.4159 & 7.6943 & 6.00 & 10 \\ 
			11 & 66.2165 & 7.8323 & 6 & 10 & 22.1472 & 7.7142 & 6.00 & 10 \\ 
			12 & 87.7007 & 11.0509 & 6 & 10 & 22.6248 & 8.0420 & 6.00 & 10 \\ 
			13 & 24.6335 & 11.2333 & 6 & 10 & 21.1775 & 8.0480 & 6.00 & 10 \\ 
			14 & 54.9015 & 12.4275 & 6 & 10 & 21.2600 & 8.2397 & 6.00 & 10 \\ 
			15 & 37.7076 & 12.6462 & 6 & 10 & 22.6980 & 8.2550 & 6.00 & 10 \\ 
			16 & 26.4832 & 12.8591 & 6 & 10 & 22.7352 & 8.8118 & 6.00 & 10 \\ 
			17 & 26.1057 & 13.4929 & 6 & 10 & 22.9484 & 9.3033 & 6.00 & 10 \\ 
			18 & 94.9910 & 14.2991 & 6 & 10 & 21.2843 & 9.3637 & 6.00 & 10 \\ 
			19 & 30.1169 & 14.8547 & 6 & 10 & 22.2098 & 11.8642 & 6.00 & 10 \\ 
			20 & 113.0954 & 15.4009 & 6 & 10 & 21.8522 & 12.8104 & 6.00 & 10 \\ \hline
			Average & 57.9054 & 9.3576 & 6 & 10 & 22.4839 & 7.1450 & 6.00 & 10 \\ \hline
		\end{tabular}
	}
\end{table}

	\begin{table}[hbt!]
		\centering
		\caption{Performance of general PSA-CMA-ES algorithm for each value of \(\kappa\) on the 2D Rastrigin and Schaffer functions. }
		\label{table:RastriginSchafferKappaMin}
			{\fontsize{12}{15}\normalsize
			\begin{tabular}{l  |l |l |l |l |l} \hline      
				\multicolumn{6}{l}{Rastrigin} \\ \hline 
				Scaling & Average & Average & Average func & Average \# & sum \\ 
				factor & CPU time  & Func val & -tion eval & generation & complexity \\ \hline 
				0 & 31.7680 & 21.0815 & 6.03 & 15 & 73.8838\\  
				0.1 & 23.2047 & 57.6005 & 116.92 & 15 & 212.7208\\  
				0.2 & 31.6248 & 53.0309 & 100.64 & 15 & 200.2963\\ 
				0.3 & 32.9695 & 45.2879 & 87.30 & 15 & 180.5542\\ 
				0.4 & 30.9252 & 18.4903 & 44.78 & 15 & 109.1999\\ 
				0.5 & 32.1098 & 8.5950 & 18.16 & 14.85 & 73.7127\\ 
				0.6 & 32.2534 & 10.8242 & 18.56 & 15 & 76.6401\\ 
				0.7 & 31.6580 & 20.5170 & 10.65 & 15 & 77.8219\\ 
				0.8 & 32.4289 & 25.5464 & 6.02 & 15 & 78.9910\\ 
				0.9 & 32.3471 & 37.8063 & 6.00 & 15 & 91.1534\\  
				1 & 45.9866 & 27.8375 & 6.00 & 15 & 94.8241\\ \hline
				\multicolumn{5}{l}{Minimum} & 73.7127\\ \hline
				\multicolumn{6}{l}{Schaffer} \\ \hline 
				Scaling & Average & Average & Average func & Average \# & sum \\ 
				factor & CPU time  & Func val & -tion eval & generation & complexity \\ \hline 
				0 & 32.6328 & 7.6501 & 6.00 & 15 & 61.2829\\  
				0.1 & 32.6520 & 6.5712 & 6.00 & 15 & 60.2232\\ 
				0.2 & 34.2102 & 6.9237 & 6.00 & 15 & 62.1339\\  
				0.3 & 36.9380 & 5.0441 & 6.00 & 15 & 62.9822\\ 
				0.4 & 33.3884 & 5.5068 & 6.00 & 15 & 59.8952\\ 
				0.5 & 31.9806 & 6.0328 & 6.00 & 15 & 59.0134\\ 
				0.6 & 33.9578 & 7.1596 & 6.00 & 15 & 62.1174\\ 
				0.7 & 33.8586 & 8.2574 & 6.00 & 15 & 63.1161\\  
				0.8 & 39.3177 & 8.3107 & 6.00 & 15 & 68.6283\\  
				0.9 & 165.6111 & 9.1197 & 6.00 & 15 & 195.7308\\ \hline 
				\multicolumn{5}{l}{Minimum} & 59.0134\\ \hline
				
			\end{tabular}
		}
		
	\end{table}

	\begin{table}[hbt!]
		\centering
		\caption{Performance Summary for Each \(\kappa\) on the 2D Rastrigin Function for \(\kappa = 0, 0.1, 0.2, 0.3\)}
		The CPU time complexity, final function value reached, average number of function evaluations, and number of generations to reach stopping condition for each \(\kappa = 0,0.1,\dots,1\). Note that \(\kappa=0\) corresponds to having no step-size correction and \(\kappa=1\) corresponds to original PSA-CMA-ES.
		\label{table:RastriginEmpricalAllruns}
		\resizebox{0.99\textwidth}{!}{
			\fontsize{12}{15}\normalsize
			\begin{tabular}{l l l l l l l l l l l l l l l l l} \hline          
				\multicolumn{2}{l}{Rastrigin function } & \multicolumn{3}{l}{Interval = [1,5] for Rastrigin} & \multicolumn{12}{l}{} \\ \hline 
				Scale & \multicolumn{4}{l}{0 = step-size not} & \multicolumn{4}{l}{ } & \multicolumn{4}{l}{ } & \multicolumn{4}{l}{ } \\ 
				factor & \multicolumn{4}{l}{corrected (CSA only)} & \multicolumn{4}{l}{0.1000} & \multicolumn{4}{l}{0.2000} & \multicolumn{4}{l}{0.3000} \\ \hline 
				Run & CPU & Func & func & \# & CPU & Func & func & \# & CPU & Func & func & \# & CPU & Func & func & \# \\
				\# & time & val & eval & gens & time & val & eval & gens & time & val & eval & gens & time & val & eval & gens \\ \hline  
				1 & 31.1408 & 24.9301 & 6 & 15 & 24.4375 & 79.4132 & 102 & 15 & 36.3850 & 65.1416 & 66 & 15 & 38.1262 & 67.2522 & 206 & 15 \\ 
				2 & 31.5686 & 4.6190 & 6 & 15 & 24.7188 & 86.7120 & 181 & 15 & 30.7332 & 52.7722 & 175 & 15 & 31.5723 & 60.8930 & 131 & 15 \\  
				3 & 31.1153 & 28.5317 & 6 & 15 & 23.5781 & 54.9264 & 51 & 15 & 30.4812 & 64.8602 & 94 & 15 & 31.1168 & 2.0331 & 33 & 15 \\ 
				4 & 31.1699 & 46.2346 & 6 & 15 & 24.0469 & 35.1471 & 194 & 15 & 31.2968 & 55.3296 & 103 & 15 & 32.3385 & 40.1777 & 88 & 15 \\ 
				5 & 31.4700 & 33.7588 & 6 & 15 & 22.8438 & 18.0735 & 56 & 15 & 30.4620 & 87.9256 & 92 & 15 & 32.6149 & 63.3648 & 146 & 15 \\ 
				6 & 31.0051 & 13.1706 & 6 & 15 & 22.1563 & 45.3845 & 59 & 15 & 31.0727 & 52.7394 & 104 & 15 & 32.8661 & 60.6917 & 143 & 15 \\ 
				7 & 31.4098 & 27.4104 & 6 & 15 & 24.0625 & 30.3568 & 78 & 15 & 31.8713 & 69.1059 & 186 & 15 & 32.0672 & 58.0458 & 105 & 15 \\ 
				8 & 31.4929 & 22.3948 & 6 & 15 & 21.7500 & 62.0905 & 136 & 15 & 30.4063 & 13.1422 & 41 & 15 & 32.4641 & 34.2258 & 65 & 15 \\  
				9 & 30.9985 & 13.5859 & 6 & 15 & 21.9844 & 61.2134 & 103 & 15 & 31.3431 & 82.4765 & 95 & 15 & 31.7421 & 19.7440 & 57 & 15 \\ 
				10 & 31.9592 & 21.5929 & 6 & 15 & 23.5313 & 31.8685 & 52 & 15 & 30.2258 & 21.5497 & 30 & 15 & 32.0735 & 24.8739 & 30 & 15 \\  
				11 & 31.2760 & 11.9113 & 6 & 15 & 21.8281 & 64.1341 & 59 & 15 & 30.3058 & 74.2946 & 47 & 15 & 31.7912 & 21.2775 & 24 & 15 \\ 
				12 & 31.1790 & 29.5408 & 6 & 15 & 22.7344 & 65.7794 & 215 & 15 & 31.3800 & 39.1699 & 70 & 15 & 32.5512 & 60.6917 & 72 & 15 \\ 
				13 & 32.0814 & 9.4169 & 6 & 15 & 22.7344 & 115.4665 & 216 & 15 & 30.9000 & 40.2241 & 53 & 15 & 33.1298 & 54.4007 & 155 & 15 \\ 
				14 & 31.3309 & 6.9051 & 6 & 15 & 22.5156 & 51.9484 & 188 & 15 & 31.7809 & 53.3399 & 207 & 15 & 33.4846 & 104.7285 & 159 & 15 \\  
				15 & 34.7789 & 21.4414 & 6 & 15 & 21.7500 & 73.0362 & 135 & 15 & 32.6654 & 60.7838 & 180 & 15 & 35.8225 & 49.9147 & 35 & 15 \\ 
				16 & 32.0937 & 36.4256 & 6 & 15 & 22.6250 & 61.6516 & 195 & 15 & 33.5543 & 79.1985 & 99 & 15 & 32.9145 & 15.0931 & 29 & 15 \\  
				17 & 32.0933 & 9.6595 & 6 & 15 & 23.2969 & 46.0945 & 90 & 15 & 32.2482 & 24.8757 & 93 & 15 & 32.7092 & 61.7846 & 65 & 15 \\ 
				18 & 32.5546 & 30.6080 & 6 & 15 & 24.7344 & 72.0132 & 104 & 15 & 31.8681 & 63.8158 & 122 & 15 & 32.9955 & 2.0927 & 17 & 15 \\ 
				19 & 31.9799 & 13.8909 & 6 & 15 & 24.4063 & 71.7114 & 65 & 15 & 32.0063 & 25.4157 & 62 & 15 & 33.8483 & 99.4937 & 155 & 15 \\ 
				20 & 32.6620 & 15.6009 & 6 & 15 & 24.3594 & 24.9884 & 60 & 15 & 31.5091 & 34.4566 & 92 & 15 & 33.1615 & 4.9782 & 30 & 15 \\ \hline
				Average & 31.7680 & 21.0815 & 6 & 15 & 23.2047 & 57.6005 & 117 & 15 & 31.6248 & 53.0309 & 101 & 15 & 32.9695 & 45.2879 & 87 & 15 \\ \hline
				
			\end{tabular}
		}
		
	\end{table}
	
	\begin{table}[hbt!]
		\centering
		\caption{Performance Summary for Each \(\kappa\) on the 2D Rastrigin Function for \(\kappa = 0.4, 0.5, 0.6, 0.7\)}
		The CPU time complexity, final function value reached, average number of function evaluations, and number of generations to reach stopping condition for each \(\kappa = 0,0.1,\dots,1\). Note that \(\kappa=0\) corresponds to having no step-size correction and \(\kappa=1\) corresponds to original PSA-CMA-ES.
		\label{table:RastriginEmpricalAllruns_2}
		\resizebox{0.99\textwidth}{!}{
			\fontsize{12}{15}\normalsize
			\begin{tabular}{l l l l l l l l l l l l l l l l l} \hline          
				\multicolumn{2}{l}{Rastrigin function } & \multicolumn{3}{l}{Interval = [1,5] for Rastrigin} & \multicolumn{12}{l}{} \\ \hline 
				Scale & \multicolumn{4}{l}{ } & \multicolumn{4}{l}{ } & \multicolumn{4}{l}{ } & \multicolumn{4}{l}{ } \\ 
				factor & \multicolumn{4}{l}{0.4000} & \multicolumn{4}{l}{0.5000} & \multicolumn{4}{l}{0.6000} & \multicolumn{4}{l}{0.7000} \\ \hline 
				Run & CPU & Func & function & \# & CPU & Func & func & \# & CPU & Func & func & \# & CPU & Func & func & \# \\
				\# & time & val & eval & gens & time & val & eval & gens & time & val & eval & gens & time & val & eval & gens \\ \hline  
				1 & 30.6717 & 60.6917 & 120 & 15 & 33.9963 & 8.6438 & 30 & 15 & 32.7623 & 16.9150 & 13 & 15 & 31.8569 & 12.7187 & 12 & 15 \\  
				2 & 30.1432 & 23.9742 & 30 & 15 & 31.3888 & 1.9901 & 14 & 15 & 31.4014 & 5.3041 & 12 & 15 & 31.5262 & 25.1027 & 6 & 15 \\  
				3 & 30.3138 & 1.9994 & 14 & 15 & 32.2391 & 5.2063 & 9 & 15 & 32.1327 & 1.7178 & 14 & 15 & 31.2241 & 20.7890 & 6 & 15 \\  
				4 & 29.8384 & 3.9799 & 14 & 15 & 31.5227 & 5.1659 & 14 & 15 & 32.3463 & 51.7408 & 99 & 15 & 31.5300 & 2.3331 & 24 & 15 \\ 
				5 & 30.2371 & 12.9369 & 30 & 15 & 31.6634 & 26.1798 & 13 & 15 & 32.8732 & 1.0321 & 27 & 15 & 31.8046 & 27.2669 & 25 & 15 \\ 
				6 & 32.6920 & 3.9799 & 21 & 15 & 32.3713 & 19.8992 & 23 & 15 & 31.8101 & 5.0576 & 12 & 15 & 31.4754 & 8.7502 & 13 & 15 \\  
				7 & 30.1863 & 13.3722 & 27 & 15 & 31.9468 & 7.9597 & 21 & 15 & 32.8274 & 1.1223 & 14 & 15 & 31.9185 & 39.3866 & 6 & 15 \\  
				8 & 32.0090 & 44.7726 & 135 & 15 & 32.8206 & 13.1002 & 18 & 15 & 31.9782 & 3.9885 & 15 & 15 & 31.1621 & 19.5321 & 6 & 15 \\ 
				9 & 30.1279 & 8.9546 & 16 & 15 & 32.0132 & 0.9951 & 31 & 15 & 32.2197 & 0.9971 & 17 & 15 & 31.5664 & 15.4654 & 6 & 15 \\ 
				10 & 30.3988 & 5.5769 & 21 & 15 & 35.4773 & 0.9965 & 15 & 15 & 32.3597 & 3.9800 & 28 & 15 & 31.5865 & 15.9816 & 12 & 15 \\ 
				11 & 31.1976 & 5.3728 & 23 & 15 & 31.9473 & 19.9119 & 13 & 15 & 31.7013 & 8.9680 & 15 & 15 & 31.6663 & 16.6010 & 16 & 15 \\ 
				12 & 30.4420 & 0.9950 & 14 & 15 & 32.6856 & 15.9193 & 13 & 15 & 32.3069 & 4.9893 & 14 & 15 & 32.0320 & 11.0433 & 19 & 15 \\  
				13 & 30.7063 & 6.3442 & 24 & 15 & 31.8984 & 9.9496 & 22 & 15 & 31.4999 & 4.9762 & 9 & 15 & 31.0774 & 5.4500 & 6 & 15 \\ 
				14 & 31.3515 & 9.2690 & 17 & 15 & 32.5662 & 1.9903 & 17 & 15 & 32.2140 & 36.3328 & 6 & 15 & 31.5672 & 9.6635 & 8 & 15 \\ 
				15 & 30.9132 & 12.9345 & 54 & 15 & 25.8104 & 0.0058 & 12 & 12 & 31.8664 & 5.2175 & 8 & 15 & 31.2062 & 32.5926 & 6 & 15 \\  
				16 & 30.9023 & 46.2031 & 25 & 15 & 31.8790 & 4.9748 & 28 & 15 & 32.7932 & 8.9549 & 22 & 15 & 31.7098 & 33.8534 & 6 & 15 \\ 
				17 & 32.4904 & 24.8739 & 96 & 15 & 32.1926 & 12.9344 & 17 & 15 & 31.9237 & 15.9195 & 12 & 15 & 31.2807 & 31.9219 & 6 & 15 \\ 
				18 & 30.5094 & 0.9950 & 17 & 15 & 32.7991 & 6.1282 & 14 & 15 & 32.7866 & 28.4838 & 17 & 15 & 32.2711 & 29.9641 & 6 & 15 \\ 
				19 & 31.6274 & 8.9546 & 50 & 15 & 32.1427 & 8.9547 & 17 & 15 & 32.2087 & 2.0473 & 11 & 15 & 32.2881 & 46.7398 & 10 & 15 \\  
				20 & 31.7462 & 73.6257 & 148 & 15 & 32.8350 & 0.9950 & 25 & 15 & 33.0562 & 8.7385 & 7 & 15 & 32.4103 & 5.1851 & 14 & 15 \\ \hline
				Average & 30.9252 & 18.4903 & 45 & 15 & 32.1098 & 8.5950 & 18 & 15 & 32.2534 & 10.8242 & 19 & 15 & 31.6580 & 20.5170 & 11 & 15 \\ \hline
				
			\end{tabular}
		}
		
	\end{table}
	
	\begin{table}[hbt!]
		\centering
		\caption{Performance Summary for Each \(\kappa\) on the 2D Rastrigin Function for \(\kappa = 0.8, 0.9, 1.0\) }
		The CPU time complexity, final function value reached, average number of function evaluations, and number of generations to reach stopping condition for each \(\kappa = 0,0.1,\dots,1\). Note that \(\kappa=0\) corresponds to having no step-size correction and \(\kappa=1\) corresponds to original PSA-CMA-ES.
		\label{table:RastriginEmpricalAllruns_3}
		\resizebox{0.99\textwidth}{!}{
			\fontsize{12}{15}\normalsize
			\begin{tabular}{l l l l l l l l l l l l l l l l l} \hline          
				\multicolumn{2}{l}{Rastrigin function } & \multicolumn{3}{l}{Interval = [1,5] for Rastrigin} & \multicolumn{12}{l}{} \\ \hline 
				Scale & \multicolumn{4}{l}{ } & \multicolumn{4}{l}{ } & \multicolumn{4}{l}{1 = general} & \multicolumn{4}{l}{ } \\ 
				factor & \multicolumn{4}{l}{0.8000} & \multicolumn{4}{l}{0.9000} & \multicolumn{4}{l}{PSA-CMA-ES} & \multicolumn{4}{l}{ } \\ \hline 
				Run & CPU & Func & func & \# & CPU & Func & function & \# & CPU & Func & func & \# &   &   &   &  \\
				\# & time & val & eval & gens & time & val & eval & gens & time & val & eval & gens &   &   &   &   \\ \hline  
				1 & 32.0452 & 11.6659 & 6 & 15 & 32.8212 & 59.3563 & 6 & 15 & 119.4307 & 41.2789 & 6 & 15 & \multicolumn{4}{l}{} \\  
				2 & 32.2467 & 27.9563 & 6 & 15 & 32.9347 & 36.8837 & 6 & 15 & 31.1844 & 8.5050 & 6 & 15 & \multicolumn{4}{l}{} \\ 
				3 & 31.5090 & 47.8848 & 6 & 15 & 31.1912 & 37.6438 & 6 & 15 & 34.3967 & 35.2149 & 6 & 15 & \multicolumn{4}{l}{} \\ 
				4 & 32.2324 & 13.6714 & 6 & 15 & 32.1401 & 25.9330 & 6 & 15 & 42.4419 & 38.0120 & 6 & 15 & \multicolumn{4}{l}{} \\  
				5 & 32.1070 & 22.8479 & 6 & 15 & 32.9332 & 52.4305 & 6 & 15 & 46.9828 & 26.6687 & 6 & 15 & \multicolumn{4}{l}{} \\  
				6 & 32.4219 & 17.7171 & 6 & 15 & 32.2846 & 28.2692 & 6 & 15 & 31.9897 & 30.1167 & 6 & 15 & \multicolumn{4}{l}{} \\ 
				7 & 31.8999 & 10.0187 & 6 & 15 & 31.3483 & 38.7073 & 6 & 15 & 35.0829 & 46.6558 & 6 & 15 & \multicolumn{4}{l}{} \\  
				8 & 32.6407 & 35.5850 & 6 & 15 & 31.7662 & 32.2996 & 6 & 15 & 33.3601 & 42.1097 & 6 & 15 & \multicolumn{4}{l}{} \\ 
				9 & 32.6715 & 62.1696 & 6 & 15 & 32.6017 & 22.2138 & 6 & 15 & 34.2264 & 8.2475 & 6 & 15 & \multicolumn{4}{l}{} \\ 
				10 & 32.4909 & 27.4574 & 6 & 15 & 31.6150 & 40.5293 & 6 & 15 & 42.1668 & 11.6067 & 6 & 15 & \multicolumn{4}{l}{} \\ 
				11 & 32.3702 & 5.3721 & 6 & 15 & 33.0422 & 55.3996 & 6 & 15 & 31.5921 & 32.3726 & 6 & 15 & \multicolumn{4}{l}{} \\  
				12 & 32.2746 & 36.7432 & 6 & 15 & 31.7015 & 50.3359 & 6 & 15 & 33.2432 & 25.0511 & 6 & 15 & \multicolumn{4}{l}{} \\ 
				13 & 32.6757 & 21.9755 & 6 & 15 & 32.3399 & 54.2043 & 6 & 15 & 39.0653 & 25.8847 & 6 & 15 & \multicolumn{4}{l}{} \\  
				14 & 31.9320 & 35.6981 & 6 & 15 & 31.7793 & 27.1840 & 6 & 15 & 90.4119 & 46.5032 & 6 & 15 & \multicolumn{4}{l}{} \\  
				15 & 32.6726 & 24.2976 & 6 & 15 & 33.4121 & 35.1416 & 6 & 15 & 46.4903 & 47.7954 & 6 & 15 & \multicolumn{4}{l}{} \\  
				16 & 32.4198 & 27.3697 & 6 & 15 & 32.0373 & 16.0352 & 6 & 15 & 39.2397 & 26.1588 & 6 & 15 & \multicolumn{4}{l}{} \\ 
				17 & 33.0279 & 20.3560 & 6 & 15 & 32.6411 & 19.4591 & 6 & 15 & 32.6473 & 20.6702 & 6 & 15 & \multicolumn{4}{l}{} \\  
				18 & 32.3373 & 32.5830 & 6 & 15 & 32.1138 & 34.9183 & 6 & 15 & 79.2915 & 9.7315 & 6 & 15 & \multicolumn{4}{l}{} \\ 
				19 & 33.6175 & 27.0031 & 6 & 15 & 33.2098 & 41.4315 & 6 & 15 & 37.2033 & 14.3622 & 6 & 15 & \multicolumn{4}{l}{} \\ 
				20 & 32.9852 & 2.5566 & 6 & 15 & 33.0280 & 47.7506 & 6 & 15 & 39.2854 & 19.8041 & 6 & 15 & \multicolumn{4}{l}{} \\ \hline
				Average & 32.4289 & 25.5464 & 6 & 15 & 32.3471 & 37.8063 & 6 & 15 & 45.9866 & 27.8375 & 6 & 15 & \multicolumn{4}{l}{} \\ \hline
				
			\end{tabular}
		}
		
	\end{table}

	\begin{table}[hbt!]
		\centering
		\caption{Performance Summary for Each \(\kappa\) on the 2D Schaffer Function}
		The CPU time complexity, final function value reached, average number of function evaluations, and number of generations to reach stopping condition for each \(\kappa = 0,0.1,\dots,1\). Note that \(\kappa=0\) corresponds to having no step-size correction and \(\kappa=1\) which corresponds to the original PSA-CMA-ES algorithm was omitted as the complexity was too high to run 20 separate runs of 15 generations each.
		\label{table:SchafferEmpricalAllruns}
		\resizebox{0.99\textwidth}{!}{
			\begin{tabular}{|l |l |l |l |l |l |l |l |l |l |l |l |l |l |l |l |l |l |l |l |l|} \hline   
				\multicolumn{2}{|l|}{Schaffer function } & \multicolumn{3}{|l|}{Interval = [10,100] for Schaffer} & \multicolumn{12}{|l|}{} &  &  &  &  \\ \hline 
				Scale factor & \multicolumn{4}{|l|}{0 = step-size note corrected (CSA only)} & \multicolumn{4}{|l|}{0.1000} & \multicolumn{4}{|l|}{0.2000} & \multicolumn{4}{|l|}{0.3000} & \multicolumn{4}{|l|}{0.4000} \\ \hline 
				Run \# & CPU time & Func val & function eval & \# generations & CPU time & Func val & function eval & \# generations & CPU time & Func val & function eval & \# generations & CPU time & Func val & function eval & \# generations & CPU time & Func val & function eval & \# generations \\ \hline 
				1 & 32.2807 & 7.0650 & 6 & 15 & 33.6482 & 0.5149 & 6 & 15 & 34.1626 & 6.6240 & 6 & 15 & 31.7344 & 7.1707 & 6 & 15 & 29.2075 & 9.3987 & 6 & 15 \\ 
				2 & 33.1100 & 11.6047 & 6 & 15 & 31.6488 & 7.4201 & 6 & 15 & 33.5734 & 9.6315 & 6 & 15 & 35.9258 & 7.6940 & 6 & 15 & 32.5787 & 4.8773 & 6 & 15 \\  
				3 & 32.0568 & 3.7639 & 6 & 15 & 34.4805 & 9.9426 & 6 & 15 & 31.3102 & 7.6940 & 6 & 15 & 35.4653 & 7.1707 & 6 & 15 & 31.7190 & 7.1707 & 6 & 15 \\ 
				4 & 31.3655 & 2.1145 & 6 & 15 & 32.8757 & 3.4530 & 6 & 15 & 34.0326 & 9.3987 & 6 & 15 & 36.3092 & 3.7583 & 6 & 15 & 32.4797 & 2.7702 & 6 & 15 \\ 
				5 & 32.5847 & 5.9498 & 6 & 15 & 33.2571 & 10.7824 & 6 & 15 & 33.4465 & 3.4248 & 6 & 15 & 36.0007 & 2.7916 & 6 & 15 & 31.7610 & 8.2396 & 6 & 15 \\  
				6 & 32.1138 & 7.7872 & 6 & 15 & 32.7152 & 1.3596 & 6 & 15 & 34.6938 & 6.6694 & 6 & 15 & 36.0619 & 1.7695 & 6 & 15 & 32.1004 & 2.7917 & 6 & 15 \\ 
				7 & 32.8578 & 3.3524 & 6 & 15 & 32.6844 & 4.9518 & 6 & 15 & 33.2845 & 6.3810 & 6 & 15 & 35.9434 & 11.2242 & 6 & 15 & 31.2818 & 4.8773 & 6 & 15 \\ 
				8 & 32.6864 & 12.3528 & 6 & 15 & 28.7471 & 10.0526 & 6 & 15 & 33.7735 & 14.7758 & 6 & 15 & 36.6620 & 4.8773 & 6 & 15 & 32.9181 & 6.1898 & 6 & 15 \\ 
				9 & 32.5141 & 11.9919 & 6 & 15 & 33.1766 & 2.8490 & 6 & 15 & 33.3470 & 6.6694 & 6 & 15 & 36.0377 & 5.7314 & 6 & 15 & 31.7361 & 5.2940 & 6 & 15 \\ 
				10 & 32.5549 & 9.8955 & 6 & 15 & 32.2795 & 6.6719 & 6 & 15 & 35.2015 & 3.9672 & 6 & 15 & 36.1955 & 3.4102 & 6 & 15 & 32.0594 & 7.2699 & 6 & 15 \\ 
				11 & 33.0904 & 6.2006 & 6 & 15 & 33.5394 & 3.1401 & 6 & 15 & 34.2338 & 9.3987 & 6 & 15 & 37.2394 & 8.2396 & 6 & 15 & 32.7297 & 5.7314 & 6 & 15 \\  
				12 & 31.7329 & 12.5422 & 6 & 15 & 33.0714 & 5.2940 & 6 & 15 & 33.7632 & 11.4552 & 6 & 15 & 37.2133 & 4.4809 & 6 & 15 & 37.1398 & 7.3124 & 6 & 15 \\ 
				13 & 33.0252 & 12.7022 & 6 & 15 & 33.3598 & 8.2265 & 6 & 15 & 34.5368 & 4.3989 & 6 & 15 & 37.6766 & 4.4810 & 6 & 15 & 33.9817 & 5.7314 & 6 & 15 \\  
				14 & 32.6619 & 3.2518 & 6 & 15 & 30.4516 & 8.2396 & 6 & 15 & 35.0162 & 5.5427 & 6 & 15 & 37.0796 & 4.1236 & 6 & 15 & 34.1537 & 7.1721 & 6 & 15 \\ 
				15 & 33.4630 & 3.7140 & 6 & 15 & 33.4092 & 6.0727 & 6 & 15 & 34.8667 & 4.1377 & 6 & 15 & 39.7107 & 1.5566 & 6 & 15 & 33.5571 & 4.4809 & 6 & 15 \\ 
				16 & 32.2826 & 3.7477 & 6 & 15 & 29.4900 & 11.0043 & 6 & 15 & 35.2015 & 0.6024 & 6 & 15 & 38.9421 & 6.6694 & 6 & 15 & 33.5841 & 6.1897 & 6 & 15 \\ 
				17 & 32.3343 & 7.4677 & 6 & 15 & 33.8899 & 9.0708 & 6 & 15 & 34.5991 & 6.1897 & 6 & 15 & 38.8162 & 2.6414 & 6 & 15 & 37.7586 & 6.1897 & 6 & 15 \\ 
				18 & 32.6292 & 13.6017 & 6 & 15 & 33.8642 & 8.8077 & 6 & 15 & 34.6260 & 7.1707 & 6 & 15 & 38.4099 & 3.0916 & 6 & 15 & 37.8768 & 2.7917 & 6 & 15 \\ 
				19 & 32.3006 & 9.7850 & 6 & 15 & 33.7445 & 4.6043 & 6 & 15 & 35.5285 & 7.1707 & 6 & 15 & 39.2581 & 2.8300 & 6 & 15 & 34.5427 & 2.2478 & 6 & 15 \\ 
				20 & 35.0111 & 4.1124 & 6 & 15 & 32.7070 & 8.9664 & 6 & 15 & 35.0065 & 7.1707 & 6 & 15 & 38.0790 & 7.1707 & 6 & 15 & 34.6026 & 3.4103 & 6 & 15 \\ 
				Average & 32.6328 & 7.6501 & 6 & 15 & 32.6520 & 6.5712 & 6 & 15 & 34.2102 & 6.9237 & 6 & 15 & 36.9381 & 5.0441 & 6 & 15 & 33.3884 & 5.5068 & 6 & 15 \\ \hline
				Scale factor & \multicolumn{4}{|l|}{0.5000} & \multicolumn{4}{|l|}{0.6000} & \multicolumn{4}{|l|}{0.7000} & \multicolumn{4}{|l|}{0.8000} & \multicolumn{4}{|l|}{0.9000} \\ \hline 
				Run \# & CPU time & Func val & function eval & \# generations & CPU time & Func val & function eval & \# generations & CPU time & Func val & function eval & \# generations & CPU time & Func val & function eval & \# generations & CPU time & Func val & function eval & \# generations \\ \hline 
				1 & 34.9447 & 4.4869 & 6 & 15 & 38.6262 & 5.2946 & 6 & 15 & 32.2997 & 6.3025 & 6 & 15 & 37.9748 & 6.6703 & 6 & 15 & 34.8891 & 4.4810 & 6.0000 & 15.0000 \\  
				2 & 31.2726 & 7.6940 & 6 & 15 & 32.1064 & 8.2590 & 6 & 15 & 32.7741 & 8.2396 & 6 & 15 & 32.4694 & 4.7551 & 6 & 15 & 53.1390 & 6.1307 & 6.0000 & 15.0000 \\  
				3 & 31.5944 & 11.3362 & 6 & 15 & 32.7252 & 11.4842 & 6 & 15 & 32.7411 & 12.5560 & 6 & 15 & 33.3774 & 8.3451 & 6 & 15 & 41.3691 & 5.4011 & 6.0000 & 15.0000 \\ 
				4 & 30.2471 & 4.7713 & 6 & 15 & 32.8183 & 2.6219 & 6 & 15 & 34.1528 & 4.3814 & 6 & 15 & 32.4914 & 11.3835 & 6 & 15 & 39.4772 & 18.5032 & 6.0000 & 15.0000 \\  
				5 & 31.4550 & 4.1074 & 6 & 15 & 36.7010 & 12.2125 & 6 & 15 & 33.2970 & 7.6618 & 6 & 15 & 34.8323 & 5.1894 & 6 & 15 & 44.8450 & 10.5240 & 6 & 15 \\ 
				6 & 31.2220 & 7.7155 & 6 & 15 & 32.8340 & 6.3335 & 6 & 15 & 33.3807 & 4.7501 & 6 & 15 & 34.1978 & 5.0417 & 6 & 15 & 64.1702 & 11.0048 & 6 & 15 \\ 
				7 & 31.7646 & 6.2889 & 6 & 15 & 34.0693 & 7.1223 & 6 & 15 & 32.6883 & 9.7808 & 6 & 15 & 77.4893 & 14.3874 & 6 & 15 & 35.8874 & 6.0571 & 6 & 15 \\ 
				8 & 31.8197 & 7.2535 & 6 & 15 & 32.8291 & 12.5618 & 6 & 15 & 37.8924 & 10.9651 & 6 & 15 & 33.8628 & 6.7552 & 6 & 15 & 72.1146 & 9.0246 & 6 & 15 \\ 
				9 & 31.5695 & 6.2002 & 6 & 15 & 33.4142 & 6.5425 & 6 & 15 & 33.4654 & 3.2977 & 6 & 15 & 33.8365 & 3.1543 & 6 & 15 & 46.1425 & 12.3920 & 6 & 15 \\  
				10 & 31.4269 & 4.1082 & 6 & 15 & 33.8178 & 8.2623 & 6 & 15 & 34.7398 & 9.5396 & 6 & 15 & 36.9573 & 7.6842 & 6 & 15 & 179.9999 & 8.1672 & 6 & 15 \\ 
				11 & 31.3548 & 5.7314 & 6 & 15 & 33.0098 & 6.0485 & 6 & 15 & 32.9613 & 3.4174 & 6 & 15 & 47.3943 & 11.4556 & 6 & 15 & 277.9109 & 10.3458 & 6 & 15 \\ 
				12 & 31.4597 & 5.2941 & 6 & 15 & 33.4093 & 5.7391 & 6 & 15 & 33.0483 & 8.5131 & 6 & 15 & 46.9266 & 19.2053 & 6 & 15 & 246.8716 & 7.8789 & 6 & 15 \\ 
				13 & 30.6965 & 4.8773 & 6 & 15 & 33.4769 & 7.1725 & 6 & 15 & 34.1169 & 7.0834 & 6 & 15 & 50.6388 & 6.9330 & 6 & 15 & 79.7734 & 15.2705 & 6 & 15 \\  
				14 & 32.0590 & 2.2483 & 6 & 15 & 33.6285 & 8.5776 & 6 & 15 & 33.9572 & 13.4987 & 6 & 15 & 33.9541 & 9.6439 & 6 & 15 & 377.8324 & 10.8914 & 6 & 15 \\  
				15 & 31.9169 & 7.4091 & 6 & 15 & 33.5778 & 4.5418 & 6 & 15 & 33.6633 & 12.3486 & 6 & 15 & 34.2130 & 5.0283 & 6 & 15 & 301.5669 & 9.2141 & 6 & 15 \\  
				16 & 33.0470 & 7.1709 & 6 & 15 & 33.3678 & 6.5431 & 6 & 15 & 34.0189 & 5.0566 & 6 & 15 & 35.1120 & 10.0984 & 6 & 15 & 58.1316 & 9.3283 & 6 & 15 \\  
				17 & 32.3445 & 6.6707 & 6 & 15 & 33.0487 & 3.5677 & 6 & 15 & 34.7311 & 11.0254 & 6 & 15 & 45.1374 & 9.0836 & 6 & 15 & 876.1363 & 5.6015 & 6 & 15 \\ 
				18 & 31.8136 & 5.7315 & 6 & 15 & 33.8254 & 6.5322 & 6 & 15 & 35.4761 & 5.2658 & 6 & 15 & 34.6863 & 8.5563 & 6 & 15 & 108.0307 & 9.8046 & 6 & 15 \\  
				19 & 34.9063 & 4.1063 & 6 & 15 & 36.8559 & 10.0439 & 6 & 15 & 34.2570 & 11.9161 & 6 & 15 & 34.1639 & 6.0756 & 6 & 15 & 320.7381 & 5.9656 & 6 & 15 \\  
				20 & 32.6974 & 7.4546 & 6 & 15 & 35.0142 & 3.7303 & 6 & 15 & 33.5116 & 9.5495 & 6 & 15 & 36.6378 & 6.7671 & 6 & 15 & 53.1966 & 6.4078 & 6 & 15 \\  
				Average & 31.9806 & 6.0328 & 6 & 15 & 33.9578 & 7.1596 & 6 & 15 & 33.8587 & 8.2575 & 6 & 15 & 39.3177 & 8.3107 & 6 & 15 & 165.6111 & 9.1197 & 6 & 15 \\ \hline
				
			\end{tabular}
		}
		
	\end{table}

\end{document}